\newenvironment{theoremrecall}[1]{%
  \theoremrecallinner
}{\endtheoremrecallinner}
\renewcommand*{\proofname}{Proof sketch}
\begin{document}

\title{Modifications of the Miller definition of contrastive (counterfactual) explanations}
\titlerunning{Modifications of the Miller definition of contrastive explanations}

\author{
    Kevin McAreavey%\inst{1}
    \and
    Weiru Liu%\inst{1}
}
\authorrunning{K. McAreavey and W. Liu}

\institute{
    University of Bristol, United Kingdom \\
    \email{\{ kevin.mcareavey, weiru.liu \}@bristol.ac.uk}
}

\maketitle

\begin{abstract}
	Miller recently proposed a definition of contrastive (counterfactual) explanations based on the well-known Halpern-Pearl (HP) definitions of causes and (non-contrastive) explanations.
	Crucially, the Miller definition was based on the original HP definition of explanations, but this has since been modified by Halpern; presumably because the original yields counterintuitive results in many standard examples.
	More recently Borner has proposed a third definition, observing that this modified HP definition may also yield counterintuitive results.
	In this paper we show that the Miller definition inherits issues found in the original HP definition.
	We address these issues by proposing two improved variants based on the more robust modified HP and Borner definitions.
	We analyse our new definitions and show that they retain the spirit of the Miller definition where all three variants satisfy an alternative unified definition that is modular with respect to an underlying definition of non-contrastive explanations.
	To the best of our knowledge this paper also provides the first explicit comparison between the original and modified HP definitions.
    % \keywords{contrastive explanations \and structural equation models.}
\end{abstract}

\section{Introduction}\label{sec:introduction}
% Research on explainable AI (XAI) has seen a massive resurgence in recent years.
% This has been motivated in large part by concerns over the increasing deployment of opaque machine learning systems~\cite{Adadi:IEEEA:2018,Arrieta:IF:2020} but has also been mirrored across other AI subfields~\cite{Anjomshoae:AAMAS:2019,Chakraborti:IJCAI:2020,Kraus:AAAI:2020}.
Research on explainable AI (XAI) has seen a massive resurgence in recent years, motivated in large part by concerns over the increasing deployment of opaque machine learning models~\cite{Adadi:IEEEA:2018,Arrieta:IF:2020}.
A common criticism of XAI however is that it exhibits an over-reliance on researcher intuition~\cite{Barocas:FAT:2020,Keane:IJCAI:2021,Miller:AIJ:2019}.
In Miller's seminal survey for the XAI community on insights from philosophy and social science~\cite{Miller:AIJ:2019} he advocates an important theory on human explanations: that they are causal answers to contrastive why-questions.
Subsequently Miller proposed a definition of contrastive explanations~\cite{Miller:KER:2021} based on the well-known Halpern-Pearl (HP) definitions of actual causes~\cite{Halpern:IJCAI:2015} and (non-contrastive) explanations~\cite{Halpern:BJPS:2005b} formalised via structural equation models~\cite{Pearl:book:2000}.
These formal definitions are each designed to capture aspects of causes and explanations as understood in philosophy, social science, and law, e.g.~\cite{Gardenfors:book:1988,Hart:book:1985,Hintikka:book:1962,Lewis:chapter:1986,Lipton:RIPS:1990}.
An interesting research question then is to what extent existing work in XAI satisfies these formal definitions.

% Counterfactual explanations for machine learning~\cite{Wachter:JOLT:2018,Mothilal:FAT:2020}, abbreviated hereafter as ML-CEs, have gained significant prominence in XAI since they were first introduced in 2018, e.g.~recent surveys have identified over 350 papers on the topic~\cite{Keane:IJCAI:2021,Verma:arXiv:2022}.
% Despite the similar terminology however, ML-CEs are unrelated to the Miller definition of contrastive explanations.
% Nonetheless, ML-CEs do seem particularly well-suited to a theoretical analysis based on the Miller definition, and some initial work in this direction has already appeared, e.g.~Mothilal et al.~\cite{Mothilal:AIES:2021} use the HP definition of causes to propose a bespoke definition of (non-contrastive) explanations, called ideal machine learning model explanations, as a way to unify ML-CEs and XAI feature attribution methods~\cite{Ribeiro:KDD:2016,Lundberg:NIPS:2017}.
% However, theoretical justifications for this definition remain unclear (e.g.\ it is not equivalent to any definition we discuss in this paper), nor does it consider how ML-CEs can be understood as contrastive explanations.

Before this kind of theoretical analysis can be fully realised it is necessary to address a crucial limitation of the Miller definition: it is based on a definition of non-contrastive explanations that is now known to yield counterintuitive results in many standard examples.
More precisely, the Miller definition is based on the \emph{original} HP definition of explanations~\cite{Halpern:BJPS:2005b} although a \emph{modified} HP definition has since been proposed by Halpern~\cite{Halpern:book:2016}.
Unlike with his detailed comparisons between the various definitions of actual causes~\cite{Halpern:IJCAI:2015,Halpern:book:2016,Halpern:BJPS:2005a}, Halpern did not explicitly compare these two definitions, so limitations with the original are not entirely clear.
Informally Halpern argues that the modified HP definition yields explanations that correspond more closely to natural language usage; in particular that it captures a notion of \emph{sufficient causes} where the original did not.
More recently, Borner has shown that the modified HP definition may also yield counterintuitive results in some examples~\cite{Borner:BJPS:2021}; he argues that this is due to the modified HP definition failing to fully capture its own notion of sufficient causes, and proposes a third definition that appears to resolve these issues.

% The objective of this paper is to lay the groundwork for a theoretical analysis of existing work in XAI (especially ML-CEs) as contrastive explanations.
The objective of this paper is to lay the groundwork for a theoretical analysis of existing work in XAI with respect to the Miller definition of contrastive explanations.
% Firstly, we compare the original and modified HP definitions to show why the former is an unsatisfactory basis to define contrastive explanations; such a comparison does not seem to have been published previously.
% Secondly, we propose two improved variants of the Miller definition based on the more robust modified HP and Borner definitions.
% Thirdly, we analyse our new variants and show that the Miller definition can be viewed as modular with respect to an underlying definition of non-contrastive explanations; this decomposition should simplify future analyses.
Firstly, we illustrate why the original HP definition is problematic and argue that the modified HP and Borner definitions offer more robust alternatives; to the best of our knowledge this constitutes the first explicit comparison between the original and modified HP definitions.
Secondly, we show that the Miller definition inherits and even amplifies issues exhibited by the original HP definition.
Thirdly, we address these issues by proposing two improved variants of the Miller definition based on the modified HP and Borner definitions.
Finally, we analyse our new definitions and show that they retain the spirit of the Miller definition where all three variants satisfy an alternative unified definition that is modular with respect to a given definition of non-contrastive explanations.

The rest of this paper is organised as follows:
in Section~\ref{sec:background} we recall the original HP and Miller definitions;
in Section~\ref{sec:non-contrastive} we recall and compare the modified HP and Borner definitions;
in Section~\ref{sec:contrastive} we propose and analyse two new variants of the Miller definition;
and in Section~\ref{sec:conclusion} we conclude.

\section{Background}\label{sec:background}
If $X_{i}$ is a variable, then $\mathcal{D}(X_{i})$ denotes the non-empty set of values of $X_{i}$, called its domain.
If $X = \{ X_{1}, \dots, X_{n} \}$ is a set of variables, then $\mathcal{D}(X) = \mathcal{D}(X_{1}) \times \dots \times \mathcal{D}(X_{n})$ is the domain of $X$.
Each tuple $x = (x_{1}, \dots, x_{n}) \in \mathcal{D}(X)$ is called a setting of $X$ where $x_{i}$ is the value of $X_{i}$ in $x$.
% The choice of notation $X_{i}$ (single variable) and $X$ (set of variables) is intended to avoid cumbersome vector notation.

\subsection{Structural equation models}\label{sec:sem}
Here we recall the framework of structural equation models~\cite{Pearl:book:2000}.
%  used to formalise subsequent definitions of causes and explanations.
A signature is a set $S = U \cup V$ where $U$ is a non-empty set of exogenous variables and $V$ is a non-empty set of endogenous variables with $U \cap V = \emptyset$.
A structural equation for endogenous variable $X_{i}$ is a function $f_{i} : \mathcal{D}(S \setminus \{ X_{i} \}) \to \mathcal{D}(X_{i})$ that defines the value of $X_{i}$ based on the setting of all other variables.
A causal model is a pair $(S, F)$ where $S = U \cup V$ is a signature and $F = \{ f_{i} \mid X_{i} \in V \}$ is a set of structural equations.
A causal setting is a pair $(M, u)$ where $M$ is a causal model and $u$ is a setting of $U$, called a context.
% An endogenous variable $X_{i}$ is (directly) independent of variable $Y_{j}$ in causal setting $(M, u)$ if $f_{i}(z_{-j}, y_{j}) = f_{i}(z_{-j}, y_{j}')$ for each setting $v$ of $V$ where $z = (u, v)$ and for all values $y_{j}, y_{j}' \in \mathcal{D}(Y_{j})$.
% A causal setting $(M, u)$ induces a directed graph, called a causal network, where an edge $(Z_{i}, Z_{j})$ means that in $(M, u)$ variable $Z_{j}$ is not independent of $Z_{i}$ (alternatively, $Z_{j}$ depends on $Z_{i}$).
% % An endogenous variable $X_{i}$ is affected by variable $Y_{j}$ in causal setting $(M, u)$ if there is a path from $Z_{i}$ to $Z_{j}$ in its causal network.
% A causal model $M$ is recursive if the causal network for each causal setting $(M, u)$ is acyclic.
% A recursive causal model has a unique solution (i.e.\ a setting of $S$) in each causal setting, called the actual world.
% % A common assumption in the literature is that all causal models are recursive.
% % A recursive causal model $M$ is strongly recursive if it has the same causal network for each causal setting $(M, u)$.
% We limit our focus to recursive causal models, which is a common assumption in the literature.
As is common in the literature, we assume that causal models exhibit no cyclic dependencies.
This guarantees a causal setting has a unique solution (i.e.\ a setting of $S$), called the actual world.

For convenience we write e.g.\ $X_{i} \coloneqq \min\{ Z_{j}, Z_{k} \}$ to mean that the structural equation for $X_{i}$ is defined as $f_{i}(z) = \min\{ z_{j}, z_{k} \}$ for each setting $z$ of $Z = S \setminus \{ X_{i} \}$.
% This makes it clear that $X_{i}$ depends on $Z_{j}$ and $Z_{k}$ in each causal setting but not on other variables from $Z$. 
If $X \subseteq V$ is a set of endogenous variables and $x$ is a setting of $X$, then $X \leftarrow x$ is an intervention.
% An intervention $X \leftarrow x$ on causal model $M = (S, F)$ yields a new causal model $M_{X \leftarrow x} = (S, F')$ where $F' = \{ X_{i} \coloneqq x_{i} \mid X_{i} \in X \} \cup \{ f_{j} \in F \mid X_{j} \in V \setminus X \}$.
An intervention $X \leftarrow x$ on causal model $M = (S, F)$ yields a new causal model $M_{X \leftarrow x} = (S, [ F \setminus F_{X} ] \cup F_{X}')$ where $F_{X} = \{ f_{i} \in F \mid X_{i} \in X \}$ and $F_{X}' = \{ X_{i} \coloneqq x_{i} \mid X_{i} \in X \}$.
In other words, an intervention $X \leftarrow x$ on causal model $M$ replaces the structural equation of each variable $X_{i} \in X$ with a new structural equation that fixes the value of $X_{i}$ to $x_{i}$.
% The actual world of $(M_{X \leftarrow x}, u)$ is the counterfactual world of $(M, u)$ under $X \leftarrow x$.

A primitive event is a proposition $X_{i} = x_{i}$ where $X_{i}$ is an endogenous variable and $x_{i}$ is a value of $X_{i}$.
A primitive event $X_{i} = x_{i}$ is true in causal setting $(M, u)$, denoted $(M, u) \models (X_{i} = x_{i})$, if the value of $X_{i}$ is $x_{i}$ in the actual world of $(M, u)$.
An event $\varphi$ is a logical combination of primitive events using the standard logical connectives $\land$, $\lor$, and $\neg$.
Let $(M, u) \models \varphi$ denote that event $\varphi$ is true in causal setting $(M, u)$ by extending entailment of primitive events to logical combinations in the usual way.
A conjunction of primitive events is abbreviated $X = x$, and may be interpreted set-theoretically as its set of conjuncts $\{ X_{i} = x_{i} \mid X_{i} \in X \}$.
We abuse notation in the Boolean case where e.g.\ $X_{1} = 1 \land X_{2} = 0$ may be abbreviated as $X_{1} \land \neg X_{2}$.
If $\varphi$ is an event and $X \leftarrow x$ is an intervention, then $[ X \leftarrow x ] \varphi$ is a causal formula.
A causal formula $[ X \leftarrow x ] \varphi$ is true in causal setting $(M, u)$, denoted $(M, u) \models [ X \leftarrow x ] \varphi$, if $(M_{X \leftarrow x}, u) \models \varphi$.

We refer the reader to~\cite[Chapter 4]{Halpern:book:2016} for a discussion on the expressiveness of structural equation models and their various extensions, including the introduction of norms, typicality, and probability.

\subsection{Non-contrastive causes and explanations}\label{sec:non-constrastive}
% Here we recall the HP definition of actual causes~\cite{Halpern:IJCAI:2015} and their original definition of explanations~\cite{Halpern:BJPS:2005b}.
% These form the basis of the Miller definition of contrastive explanations that we will recall in Section~\ref{sec:background:contrastive}.
% Note that there are in fact three HP definitions of actual causes~\cite{Halpern:IJCAI:2015,Halpern:BJPS:2005a}.
Here we recall the HP definition of actual causes~\cite{Halpern:IJCAI:2015} and the original HP definition of explanations~\cite{Halpern:BJPS:2005b}.
It is worth noting that there are in fact three HP definitions of actual causes~\cite{Halpern:IJCAI:2015,Halpern:BJPS:2005a}; the one recalled here is the most recent, known as the \emph{modified} definition, and described by Halpern as his preferred~\cite{Halpern:book:2016}.

\textbf{Actual causes:}
A conjunction of primitive events $X = x$ is an actual cause of event $\varphi$ in causal setting $(M, u)$ if:
\begin{description}
    \item[AC1] $(M, u) \models (X = x) \land \varphi$
    \item[AC2] There is a set of variables $W \subseteq V$ and a setting $x'$ of $X$ such that if $(M, u) \models (W = w)$ then $(M, u) \models [ X \leftarrow x', W \leftarrow w ] \neg \varphi$
    \item[AC3] $X = x$ is minimal relative to AC1--AC2
\end{description}
Intuitively, AC1 requires that $X = x$ and $\varphi$ occur in the actual world of $(M, u)$, while AC3 requires that all conjuncts of an actual cause are necessary.
Perhaps less intuitive is AC2, known as the \emph{but-for} clause~\cite{Hart:book:1985}: but for $X = x$ having occurred, $\varphi$ would not have occurred.
AC2 is what makes this a counterfactual definition of causality, where $\varphi$ is said to counterfactually depend on $X = x$.

Some additional terminology follows.
A conjunction of primitive events $X = x$ is a \textbf{weak} actual cause of event $\varphi$ in causal setting $(M, u)$ if it satisfies AC1 and AC2.
It is worth highlighting that weak actual causes were called sufficient causes in the original HP definition~\cite{Halpern:BJPS:2005a}, which is also the term used by Miller~\cite{Miller:KER:2021}.
In~\cite{Halpern:book:2016} however it is clear that Halpern no longer regards this as an adequate definition of sufficient causes, so we have replaced the term.
We use terms \textbf{partial} and \textbf{part of} to refer to subsets and elements from relevant conjunctions, respectively.
% If $X = x$ is an actual cause of $\varphi$ in $(M, u)$ then each conjunct $X_{i} = x_{i}$ from $X = x$ is \emph{part of} an actual cause of $\varphi$ in $(M, u)$, while each subset of conjuncts $Z = z$ from $X = x$ is a \emph{partial} actual cause of $\varphi$ in $(M, u)$; these terms apply similarly to weak actual causes.
Halpern suggests~\cite{Halpern:book:2016} that an actual cause may be better understood as a complete cause, with each \emph{part of} an actual cause understood simply as a cause.
% , so in this view a partial actual cause is really a partially complete cause.

\textbf{Explanations (original HP definition):}
An epistemic state $K \subseteq \mathcal{D}(U)$ is a set of contexts considered plausible by an agent (called the explainee) prior to observing event $\varphi$.
A conjunction of primitive events $X = x$ is an explanation of event $\varphi$ in causal model $M$ relative to epistemic state $K$ if:
\begin{description}
	\item[EX1] $(M, u) \models \varphi$ for each $u \in K$
	\item[EX2] $X = x$ is a \emph{weak} actual cause of $\varphi$ in $(M, u)$ for each $u \in K$ such that $(M, u) \models (X = x)$
	\item[EX3] $X = x$ is minimal relative to EX2
	\item[EX4] $(M, u) \models \neg (X = x)$ and $(M, u') \models (X = x)$ for some $u, u' \in K$
\end{description}
Intuitively, EX1 requires that the occurrence of \emph{explanandum} $\varphi$ is certain, while EX2 requires that $X = x$ includes an actual cause of $\varphi$ in any plausible context where $X = x$ occurs.
Similar to AC3, EX3 requires that all conjuncts of the explanation are necessary.
% Finally, EX4 requires that an explanation is not already believed in $K$; an explanation should allow the explainee to revise its epistemic state by excluding some context(s) previously considered possible.
Finally, EX4 requires that the occurrence of $X = x$ is uncertain; in effect the explainee can use an explanation to revise its epistemic state by excluding some context(s) previously considered plausible.

\subsection{Contrastive causes and explanations}\label{sec:background:contrastive}
Here we recall the Miller definitions of contrastive causes and contrastive explanations~\cite{Miller:KER:2021}.
The event $\varphi$ in Section~\ref{sec:non-constrastive} is now replaced with a pair $\langle \varphi, \psi \rangle$, called the fact and contrast case, respectively.
Miller considered two contrastive variants: bi-factual ($\psi$ is true) and counterfactual ($\psi$ is false).
In this paper we focus on the counterfactual variants, where $\psi$ is also called the foil.

\textbf{Contrastive causes:}
% Here we recall the definition of contrastive actual causes by Miller~\cite{Miller:KER:2021}.
% A pair of conjunctions of primitive events $\langle X = x, X = x' \rangle$ is a contrastive actual (resp.\ contrastive weak actual) cause of a pair of events $\langle \varphi, \psi \rangle$ in causal setting $(M, u)$ if:
% \begin{description}
% 	\item[CC1] $X = x$ is a partial actual (resp.\ partial weak actual) cause of $\varphi$ in $(M, u)$
% 	\item[CC2] $(M, u) \models \neg \psi$
% 	\item[CC3] There is a set of variables $W \subseteq V$ and a setting $w$ of $W$ such that $X = x'$ is a partial actual (resp.\ partial weak actual) cause of $\psi$ in $(M_{W \leftarrow w}, u)$
% 	\item[CC4] $x_{i} \ne x_{i}'$ for each $X_{i} \in X$
% 	\item[CC5] $\langle X = x, X = x' \rangle$ is maximal relative to CC1--CC4
% \end{description}
A pair of conjunctions of primitive events $\langle X = x, X = x' \rangle$ is a contrastive cause of $\langle \varphi, \psi \rangle$ in causal setting $(M, u)$ if:
\begin{description}
	\item[CC1] $X = x$ is a \emph{partial} cause of $\varphi$ in $(M, u)$
	\item[CC2] $(M, u) \models \neg \psi$
	\item[CC3] There is a non-empty set of variables $W \subseteq V$ and a setting $w$ of $W$ such that $X = x'$ is a \emph{partial} cause of $\psi$ in $(M_{W \leftarrow w}, u)$
	\item[CC4] $x_{i} \ne x_{i}'$ for each $X_{i} \in X$
	\item[CC5] $\langle X = x, X = x' \rangle$ is maximal relative to CC1--CC4
\end{description}
% Contrastive variants of non-contrastive causes (e.g.\ actual causes, weak actual causes) are defined by substituting the word \emph{cause} as appropriate.
% Intuitively, CC1 and CC2 require that, in the actual world of $(M, u)$, the fact occurs and the foil does not.
% CC1 and CC3 then reflect the idea that contrastive causes need only reference parts of complete causes that are relevant to both fact and foil.
Contrastive variants of non-contrastive causes (e.g.\ actual causes) are defined by substituting the word \emph{cause} as appropriate.
Intuitively, CC1 and CC2 require that in the actual world, the fact occurs and the foil does not.
CC1 and CC3 then say that contrastive causes need only reference parts of complete causes that are relevant to both fact and foil.
CC4 captures what Lipton calls the \emph{difference condition}~\cite{Lipton:RIPS:1990}.
Finally, while contrastive causes are not required to reference complete causes, CC5 ensures that information is not discarded unnecessarily.

% \subsubsection{Contrastive explanations}
% Here we recall the definition of contrastive counterfactual explanations by Miller~\cite{Miller:KER:2021}.
% If $X = x$ is an explanation of $\varphi$ in $M$ relative to $K$ and $Y = y$ is the contraction of $X = x$ to variables $Y \subseteq X$ then $Y = y$ is a partial explanation of $\varphi$ in $M$ relative to $K$.
% A pair of conjunctions of primitive events $\langle X = x, X = x' \rangle$ is maximal relative to property $P$ if $\langle X = x, X = x' \rangle$ satisfies $P$ but there is no set of variables $Y \supset X$ such that if $X = x$ is the contraction of $Y = y$ to $X$ then $X = x$ satisfies $P$.

\textbf{Contrastive explanations (Miller definition):}
A pair of conjunctions of primitive events $\langle X = x, X = x' \rangle$ is a contrastive explanation of $\langle \varphi, \psi \rangle$ in causal model $M$ relative to epistemic state $K$ if:
\begin{description}
	\item[CE1] $(M, u) \models \varphi \land \neg \psi$ for each $u \in K$
	\item[CE2] $\langle X = x, X = x' \rangle$ is a contrastive \emph{weak} actual cause of $\langle \varphi, \psi \rangle$ in $(M, u)$ for each $u \in K$ such that $(M, u) \models (X = x)$
	\item[CE3] $\langle X = x, X = x' \rangle$ is minimal relative to CE2
	\item[CE4]
	\begin{description}
		\item[(a)] $(M, u) \models \neg (X = x)$ and $(M, u') \models (X = x)$ for some $u, u' \in K$
		\item[(b)] There is a non-empty set of variables $W \subseteq V$ and a setting $w$ of $W$ such that $w \ne x$ where $(M_{W \leftarrow w}, u) \models \neg (X = x')$ and $(M_{W \leftarrow w}, u') \models (X = x')$ for some $u, u' \in K$
	\end{description}
\end{description}
Intuitively, contrastive explanations are a natural extension to the original HP definition of (non-contrastive) explanations where there is a direct mapping from CE1--CE4 to EX1--EX4.
The main difference is that explanandum and explanation have been replaced with pairs, capturing fact and foil, with the definition relying on contrastive causes rather than (non-contrastive) causes.

% \begin{example}[Arthropods cont.]
% 	\dots
% \end{example}

% This definition originally assumed a definition of explanations based on the earlier definition of sufficient causes.
% In other words, while this definition is unchanged, it yields entirely different results due our use of the later definition of sufficient causes.

\section{Alternative non-contrastive explanations}\label{sec:non-contrastive}
Here we recall the modified HP definition and compare to the original HP definition, then recall the Borner definition and compare to the modified HP definition.

\subsection{Original HP vs.\ modified HP definition}
% Similar to his modifications of the original definition of actual causes, Halpern later modified the original definition of explanations~\cite{Halpern:book:2016}.
% According to Halpern~\cite[Chapter 7]{Halpern:book:2016} there are significant differences between the original and modified HP definitions, with the key difference being that the latter captures a notion of \emph{sufficient causes} where the former did not.
% It is worth highlighting here that what we have called a weak actual cause was in fact called a sufficient cause in the original definition~\cite{Halpern:BJPS:2005b}; the implication being that Halpern now regards this earlier notion of a sufficient cause as inadequate.
% In any case, he argues that the modified HP definition leads to a definition of explanations that corresponds more closely to natural language usage.
% However, unlike the modified HP definitions of actual causes, he did not formally compare the two definitions of explanations, and to the best our knowledge no such comparison has been published elsewhere.

\textbf{Explanations (modified HP definition):}
A conjunction of primitive events $X = x$ is an explanation of event $\varphi$ in $M$ relative to epistemic state $K$ if:
\begin{description}
	\item[EX1'] For each $u \in K$:
	\begin{description}
		\item[(a)] There is a conjunct $X_{i} = x_{i}$ that is \emph{part of} an actual cause of $\varphi$ in $(M, u)$ if $(M, u) \models (X = x) \land \varphi$
		\item[(b)] $(M, u) \models [ X \leftarrow x ] \varphi$
	\end{description}
	\item[EX2'] $X = x$ is minimal relative to EX1'
	\item[EX3'] $(M, u) \models (X = x) \land \varphi$ for some $u \in K$
\end{description}
In addition, an explanation is non-trivial if:
\begin{description}
	% \item[EX4'] $(M, u) \models \neg (X = x)$ for some $u \in K$ such that $(M, u) \models \varphi$
	\item[EX4'] $(M, u) \models \neg (X = x) \land \varphi$ for some $u \in K$ 
\end{description}
According to Halpern~\cite{Halpern:book:2016}, EX1' requires that $X = x$ is a sufficient cause of $\varphi$ in any plausible context where $X = x$ and $\varphi$ occur.
Similar to EX3, EX2' requires that all conjuncts of an explanation are necessary.
EX3' then requires that there is a plausible context where $X = x$ occurs given observation $\varphi$. 
Finally, EX4' requires that the occurrence of $X = x$ is uncertain given $\varphi$.

Three important differences can be observed between the original and modified HP definitions.
Firstly, EX2 requires that an explanation includes an actual cause in all relevant contexts from $K$, while EX1'(a) only requires that an explanation intersects an actual cause in the same contexts.
Secondly, EX1 requires that the occurrence of $\varphi$ is certain, while EX1'(b) only requires that the explanation is sufficient to bring about $\varphi$ in any plausible context.
Thirdly, EX4 requires that the occurrence of $X = x$ is uncertain, while EX4' requires this only for non-trivial explanations.
Together with the view that an actual cause is a complete cause, these observations seem to support the claim that the modified HP definition better captures a notion of sufficient causes.
% A consequence is that the original definition permits no explanations if the epistemic state is a singleton, while the modified HP definition permits trivial explanations in this instance.

\begin{example}[Disjunctive forest fire~\cite{Halpern:BJPS:2005a}]\label{ex:disjunctive_forest_fire}
	Consider a causal model with endogenous variables $V = \{ L, \textit{MD}, \textit{FF} \}$ where $L$ is a lightning strike, $\textit{MD}$ is a match being dropped, and $\textit{FF}$ is a forest fire.
	The (Boolean) exogenous variables are $U = \{ U_{L}, U_{\textit{MD}} \}$ and the structural equations are $L \coloneqq U_{L}$, $\textit{MD} \coloneqq U_{\textit{MD}}$, and $\textit{FF} \coloneqq L \lor \textit{MD}$.
	Intuitively, either lightning or match is enough to start the forest fire.
	Let $K$ be the epistemic state containing all contexts.
	% , so values for $U_{L}$ and $U_{\textit{MD}}$ are unknown.
	The explanandum is $\textit{FF}$.
	The modified HP explanations are (i) $L$, (ii) $\textit{MD}$, and (iii) $\textit{FF}$.
	Both (i) and (ii) are non-trivial.
	Conversely, there are no original HP explanations.
\end{example}

\begin{example}[Overdetermined forest fire~\cite{Borner:BJPS:2021}]\label{ex:overdetermined_forest_fire}
	Consider a variation on the causal model from Example~\ref{ex:disjunctive_forest_fire}.
	The endogenous variables are $V' = V \cup \{ B \}$ where $B$ is a benzine spillage.
	The (Boolean) exogenous variables are $U' = U \cup \{ U_{B} \}$ and the new structural equations are $B \coloneqq U_{B}$ and $\textit{FF} \coloneqq (\textit{MD} \land B) \lor L$.
	Intuitively, lightning is enough to start the forest fire, whereas the match requires that benzine is also present.
	% Let $K = \{ (1, 1, 1), (0, 1, 1), (0, 0, 1) \}$ be the epistemic state where tuples denote $(U_{\textit{MD}}, U_{B}, U_{L})$.
	Let $K$ be the epistemic state satisfying $U_{L} \land (\neg U_{\textit{MD}} \lor U_{B})$.
	The explanandum is $\textit{FF}$.
	The modified HP explanations are (i) $L$, (ii) $\textit{MD}$, and (iii) $\textit{FF}$.
	Only (ii) is non-trivial.
	Conversely, the original HP explanations are (i) $L \land \neg \textit{MD}$, (ii) $L \land \textit{MD}$, (iii) $L \land \neg B$, and (iv) $L \land B$. 
\end{example}

% Examples~\ref{ex:original_explanations}--\ref{ex:disjunctive_forest_fire} are common examples from the literature where the modified HP definition of explanations appears to be more well-behaved than the original.
% Comparing Examples~\ref{ex:original_explanations} and~\ref{ex:modified_explanations} we see that the modified HP definition may yield fewer explanations than the original, yet those explanations may be more compact and intuitive (e.g.\ $L = 7$ as an explanation for $O = \text{Unknown}$).
% Conversely, in Example~\ref{ex:disjunctive_forest_fire} there are in fact no explanations of $\textit{FF}$ relative to $K$ according to the original definition.
% This follows from there being only one context $u \in K$ where $\textit{FF}$ holds, which invalidates EX1.
% The original definition may thus fail to yield explanations even when an intuitive one exists according to natural language usage (e.g.\ $ L \land \textit{MD}$ as an explanation of $\textit{FF}$).

Examples~\ref{ex:disjunctive_forest_fire}--\ref{ex:overdetermined_forest_fire} are examples from the literature where the modified HP definition seems more well-behaved than the original.
In Example~\ref{ex:disjunctive_forest_fire} the original HP definition fails to yield any explanations even when an intuitive one exists according to natural language usage, e.g.\ $L$ or $\textit{MD}$ as an explanation of $\textit{FF}$.
Conversely, in Example~\ref{ex:overdetermined_forest_fire} the modified HP definition yields fewer explanations than the original, but this is because the original references seemingly irrelevant information, e.g.\ $L$ on its own is not an explanation of $\textit{FF}$.
These results play out in many other examples, typically with the original HP definition either failing to yield any explanations (because EX1 and EX4 are too strong), or yielding results that contain irrelevant information (because EX2 is problematic).
In Section~\ref{ex:modified_vs_borner} we will elaborate on modified HP explanation (ii) from Example~\ref{ex:overdetermined_forest_fire}.

\subsection{Modified HP vs.\ Borner definition}\label{ex:modified_vs_borner}
Borner examines the modified HP definition based on a closely related definition of sufficient causes proposed by Halpern~\cite{Halpern:book:2016}.
Note that Borner used an earlier HP definition of actual causes but we assume AC1--AC3 throughout.

\textbf{Sufficient causes:}
A conjunction of primitive events $X = x$ is a sufficient cause of event $\varphi$ in causal setting $(M, u)$ if:
\begin{description}
	\item[SC1] $(M, u) \models (X = x) \land \varphi$
	\item[SC2] There is a conjunct $X_{i} = x_{i}$ that is \emph{part of} an actual cause of $\varphi$ in $(M, u)$
	\item[SC3] $(M, u') \models [ X \leftarrow x ] \varphi$ for each $u' \in \mathcal{D}(U)$
	\item[SC4] $X = x$ is minimal relative to SC1--SC3
\end{description}
The difference between weak actual causes (previously called sufficient causes) and sufficient causes (as per this definition) is that the latter replaces AC2 with SC2--SC3, whereas the former simply drops AC3.
As a consequence, weak actual causes are required to include an actual cause, but SC2 only requires a sufficient cause to intersect an actual cause.
The key condition is SC3, which requires that a sufficient cause is sufficient to bring about $\varphi$ in any context.

Clearly the modified HP definition is related to this definition of sufficient causes: EX1'(a) is just SC2 applied to each context from $K$ where $X = x$ and $\varphi$ occur, while EX1'(b) is a weakening of SC3 to contexts from $K$.
% The weakening of SC3 in EX1'(b) can be understood by Halpern's suggestion that it may be too strong to require SC3 to hold even in extremely unlikely contexts~\cite{Halpern:book:2016}.
The weakening of SC3 can be understood by Halpern's suggestion that it may be too strong to require SC3 to hold even in unlikely contexts~\cite{Halpern:book:2016}.
% Borner however suggests that the modified HP definition is an attempt to adhere to the following guideline: to create an explanation of $\varphi$ for agent $\alpha$ with epistemic state $K$ take any $X = x$ about which $\alpha$ believes that, if it is true, it would be a sufficient cause of $\varphi$~\cite{Borner:BJPS:2021}.
Borner however suggests~\cite{Borner:BJPS:2021} that the modified HP definition is an attempt to adhere to this guideline: take any $X = x$ where the explainee believes, if $X = x$ is true, then $X = x$ would be a sufficient cause of $\varphi$, but then remove any conjunct of $X = x$ that is (i) already believed or (ii) deducible from $K$ and the remaining conjuncts of $X = x$.
Borner argues that, while either (i) or (ii) may be a reasonable guideline, the modified HP definition appears to arbitrarily alternate between whether it follows (i) or (ii), and this occasionally leads to counterintuitive results.
% Moreover, Borner reasonably contends that the assumption of logical omniscience evident especially in EPG' is an unrealistic assumption for human explainees~\cite{Hintikka:book:1962}.

\textbf{Explanations (Borner definition):}
% A conjunction of primitive events $X = x$ is a potential explanation~\cite{Borner:BJPS:2021} of event $\varphi$ in $M$ relative to epistemic state $K$ if:
% \begin{description}
% 	\item[E1--E2] There is a (possibly empty) conjunction of primitive events $S = s$ with $X \cap S = \emptyset$ such that for each $u \in K$:
% 	\begin{description}
% 		\item[(a)] $(X = x) \land (S = s)$ is a sufficient cause of $\varphi$ in $(M, u)$ if $(M, u) \models (X = x)$
% 		\item[(b)] $(M, u) \models (S = s)$
% 	\end{description}
% 	\item[E3] $(M, u) \models (X = x) \land \varphi$ for some $u \in K$
% 	\item[E4] $(M, u) \models \neg (X = x)$ for some $u \in K$
% \end{description}
% A conjunction of primitive events $X = x$ is an actual explanation of event $\varphi$ in causal model $M$ relative to epistemic state $K$ if E1--E3 and:
% \begin{description}
% 	\item[E5] $(M, u) \models (X = x) \land \varphi$ for each $u \in K$
% \end{description}
% A conjunction of primitive events $X = x$ is a parsimonious potential explanation of event $\varphi$ in causal model $M$ relative to epistemic state $K$ if E1--E4 and:
% \begin{description}
% 	\item[E6] $X = x$ is minimal relative to E1--E2
% \end{description}
A conjunction of primitive events $X = x$ is a potential explanation of event $\varphi$ in $M$ relative to epistemic state $K$ if:
\begin{description}
	\item[E1--E2] There is a (possibly empty) conjunction of primitive events $S = s$ with $X \cap S = \emptyset$ such that for each $u \in K$:
	\begin{description}
		\item[(a)] $(X = x) \land (S = s)$ is a sufficient cause of $\varphi$ in $(M, u)$ if $(M, u) \models (X = x)$
		\item[(b)] $(M, u) \models (S = s)$
	\end{description}
	\item[E3--E4] Same as EX3'--EX4'
	% \item[E3] $(M, u) \models (X = x) \land \varphi$ for some $u \in K$
	% % \item[E4] $(M, u) \models \neg (X = x)$ for some $u \in K$ such that $(M, u) \models \varphi$
	% \item[E4] $(M, u) \models \neg (X = x) \land \varphi$ for some $u \in K$
\end{description}
Alternatively, $X = x$ is an actual explanation if E1--E3 are satisfied and:
\begin{description}
	\item[E5] $(M, u) \models (X = x) \land \varphi$ for each $u \in K$
\end{description}
In addition, a potential explanation is parsimonious if:
\begin{description}
	\item[E6] $X = x$ is minimal relative to E1--E2
\end{description}
% Intuitively, EX1 requires that the occurrence of \emph{explanandum} $\varphi$ is certain, while EX2 requires that $X = x$ includes an actual cause of $\varphi$ in any plausible context ($u \in K$) where $X = x$ occurs.
% Similar to AC3, EX3 requires that all conjuncts of the explanation are necessary.
% Finally, EX4 requires that the occurrence of $X = x$ is uncertain; in effect the explainee can use an explanation to revise its epistemic state by excluding some context(s) previously considered plausible.
% 
% According to Halpern~\cite{Halpern:book:2016}, EX1' requires that $X = x$ is a sufficient cause of $\varphi$ in any plausible context where $X = x$ and $\varphi$ occur.
% Similar to EX3, EX2' requires that all conjuncts of an explanation are necessary.
% EX3' then requires that there is a plausible context where $X = x$ occurs given observation $\varphi$. 
% Finally, EX4' requires that the occurrence of $X = x$ is uncertain given $\varphi$.
% 
Intuitively, E1--E2(a) requires that $X = x$ is a partial sufficient cause in any plausible context where $X = x$ occurs, while E1--E2(b) requires that $X = x$ should only omit information from a sufficient cause if the occurrence of that information is certain.
E3 is just EX3', while E4 is just EX4'.
For actual explanations, E5 is a much stronger variant of E3, and for parsimonious potential explanations, E6 is comparable to EX2'.
% In the original and modified HP definitions, minimality clauses serve to exclude irrelevant information.
% However, in the Borner definition this is captured by E1--E2(a), which serves to exclude irrelevant information short of requiring logical omniscience.
In original and modified HP explanations, explicit minimality clauses serve to exclude irrelevant information.
In potential Borner explanations the minimality clause appears indirectly in E1--E2(a) via SC4.

\begin{example}[Example~\ref{ex:overdetermined_forest_fire} cont.]\label{ex:overdetermined_forest_fire_cont}
	Consider the causal model, epistemic state, and explanandum from Example~\ref{ex:overdetermined_forest_fire}.
	The (parsimonious) potential Borner explanation is $B \land \textit{MD}$.
	The actual Borner explanations are (i) $L$ and (ii) $\textit{FF}$.
	Conversely, recall that the modified HP explanations are (i) $L$, (ii) $\textit{MD}$, and (iii) $\textit{FF}$.
	% Only (i) is non-trivial.
\end{example}

% \begin{example}[Dry forest~\cite{Borner:BJPS:2021}]\label{ex:dry_forest}
% 	Consider another variation on the causal model from Example~\ref{ex:disjunctive_forest_fire}.
% 	The endogenous variables are $V'' = V \cup \{ D \}$ where $D$ is a dry forest.
% 	The (Boolean) exogenous variables are $U'' = U \cup \{ U_{D} \}$, while the new structural equations are $D \coloneqq U_{D}$ and $\textit{FF} \coloneqq (\textit{MD} \lor L) \land D$.
% 	Intuitively, if the forest is dry then either lightning or match is enough to start the forest fire.
% 	Let $K = \{ (1, 1, 0), (0, 1, 1) \}$ be the epistemic state denoted as $(U_{\textit{MD}}, U_{D}, U_{L})$.
% 	% The original HP explanations of $\textit{FF}$ relative to $K$ are (i) $\textit{MD}$ and (ii) $ L$. 
% 	The modified HP explanations are (i) $\textit{MD}$, (ii) $ L$, (iii) $ D$, and (iv) $\textit{FF}$.
% 	Both (i) and (ii) are non-trivial.
% 	The Borner potential explanations are (i) $\textit{MD} \land D$, (ii) $ L \land D$, (iii) $\textit{MD}$, and (iv) $ L$.
% 	Both (iii) and (iv) are parsimonious.
% 	The single Borner actual explanation is $\textit{FF}$.
% \end{example}

\begin{example}[Suzy and Billy~\cite{Borner:BJPS:2021}]\label{ex:suzy_billy}
	Consider a causal model with endogenous variables $V'' = \{ \textit{SS}, \textit{ST}, \textit{SH}, \textit{BS}, \textit{BT}, \textit{BH}, \textit{BB} \}$ where $\textit{SS}$ is Suzy being sober, $\textit{ST}$ is Suzy throwing, $\textit{SH}$ is Suzy hitting the bottle, and $\textit{BB}$ is the bottle breaking, with $\textit{BS}$, $\textit{BT}$, and $\textit{BH}$ the same for Billy.
	The (Boolean) exogenous variables are $U'' = \{ U_{\textit{SS}}, U_{\textit{ST}}, U_{\textit{BS}}, U_{\textit{BT}} \}$ and the structural equations are 
	$\textit{SS} \coloneqq U_{\textit{SS}}$, 
	$\textit{ST} \coloneqq U_{\textit{ST}}$, 
	$\textit{BS} \coloneqq U_{\textit{BS}}$, 
	$\textit{BT} \coloneqq U_{\textit{BT}}$, 
	$\textit{SH} \coloneqq \textit{SS} \land \textit{ST}$, 
	$\textit{BH} \coloneqq (\textit{BS} \land \textit{BT}\hspace{0.15em}) \land \neg \textit{SH}$, and 
	$\textit{SH} \coloneqq \textit{SH} \hspace{0.15em} \lor \textit{BH}$.
	Intuitively, Suzy and Billy are perfect throwers when sober, although Suzy throws harder than Billy, and if the bottle is hit then it always breaks.
	% Let $K = \{ (1, 1, 1, 1), (0, 1, 1, 1), (1, 1, 0, 1), (0, 0, 1, 1), (1, 0, 1, 1) \}$ be the epistemic state where tuples denote $(U_{\textit{SS}}, U_{\textit{ST}}, U_{\textit{BT}}, U_{\textit{BS}})$.
	Let $K$ be the epistemic state satisfying $(U_{\textit{BS}} \land U_{\textit{BT}}) \lor (U_{\textit{BS}} \land U_{\textit{SS}} \land U_{\textit{ST}})$.
	The explanandum is $\textit{BB}$.
	The (parsimonious) potential Borner explanations are 
	(i) $\textit{SS} \land \textit{ST}$, 
	(ii) $\textit{SH}$, and 
	(iii) $\textit{BH}$.
	The actual Borner explanation is $\textit{BB}$.
	Conversely, the modified HP explanations are 
	(i) $\textit{BS} \land \textit{SS}$, 
	(ii) $\textit{BS} \land \textit{ST}$, 
	(iii) $\textit{SS} \land \textit{ST}$, 
	(iv) $\textit{BT} \land \neg \textit{SS}$, 
	(v) $\textit{BT} \land \textit{SS}$, 
	(vi) $\textit{BT} \land \neg \textit{ST}$, 
	(vii) $\textit{BT} \land \textit{ST}$, 
	(viii) $\textit{BT} \land \neg \textit{SH}$, 
	(ix) $\textit{SH}$, 
	(x) $\textit{BH}$, and 
	(xi) $\textit{BB}$.
	All except (xi) are non-trivial.
\end{example}

Example~\ref{ex:overdetermined_forest_fire_cont} says that $\textit{MD}$ is a modified HP explanation of $\textit{FF}$, without reference to $B$, even though both events are required to start the fire.
According to Borner, this is due to the modified HP definition assuming logical omniscience; technically the explainee can deduce $B$ from $K$ given $\textit{MD}$, so the modified HP definition says that $B$ is irrelevant.
Borner argues that this assumption is unrealistic in humans~\cite{Hintikka:book:1962}, with its effect being to ``confront human agents with riddles by only presenting the smallest possible amount of clues by which an agent can in principle deduce the full explanation.''
The Borner definition addresses this by permitting $B \land \textit{MD}$ as a potential explanation.
% Note that $\textit{MD}$ is not permitted as a potential explanation because, while $B$ is deducible from $K$ given $\textit{MD}$, it is not deducible from $K$ exclusively.
% Similar results are seen in Example~\ref{ex:dry_forest} where $\textit{MD}$ and $ L$ are modified HP explanations, even though $ D$ is also required to bring about the fire, whereas $\textit{MD} \land D$ and $ L \land D$ are permitted as potential explanations by the Borner definition.
% The opposite issue raised by Borner is seen in Example~\ref{ex:suzy_billy}.
% Here the Borner definition permits both $\textit{BT}$ and $\textit{BS} \land \textit{BT}$ as explanations but the modified HP definitions seems unable to discard what amounts to causally irrelevant information for $\textit{BT}$, e.g.\ both $\textit{BT} \land \neg \textit{SS}$ and $\textit{BT} \land \textit{SS}$ are modified HP explanations.
Comparing Examples~\ref{ex:overdetermined_forest_fire_cont}--\ref{ex:suzy_billy} we see the alternating behaviour of the modified HP definition raised by Borner; $\textit{BS} \land \textit{SS}$ and $\textit{BS} \land \textit{ST}$ are modified HP explanations, although $\textit{BS}$ is deducible from $K$.
In this example the modified HP definition seems unable to discard what amounts to causally irrelevant information in relation to $\textit{BT}$, e.g.\ both $\textit{BT} \land \neg \textit{SS}$ and $\textit{BT} \land \textit{SS}$ are modified HP explanations.
On the other hand, the Borner definition permits no explanation that references $\textit{BT}$, which could support Halpern's view that to require SC3 to hold even in unlikely contexts may be too strong.

\section{Alternative contrastive explanations}\label{sec:contrastive}
Here we propose and analyse two improved variants of the Miller definition based the modified HP and Borner definitions.

% \subsection{New definitions}

\begin{definition}[Contrastive explanations: modified HP variant]\label{def:modified}
	A pair of conjunctions of primitive events $\langle X = x, X = x' \rangle$ is a contrastive explanation of $\langle \varphi, \psi \rangle$ in causal model $M$ relative to epistemic state $K$ if:
	\begin{description}
		\item[CH1] For each $u \in K$:
		\begin{description}
			\item[(a)] There is a pair $\langle X_{i} = x_{i}, X_{i} = x_{i}' \rangle$ that is \emph{part of} a contrastive actual cause of $\langle \varphi, \psi \rangle$ in $(M, u)$ if $(M, u) \models (X = x) \land \varphi \land \neg \psi$
			\item[(b)] There is a pair of (possibly empty) conjunctions of primitive events $\langle S = s, S = s' \rangle$ with $X \cap S = \emptyset$ such that:
			\begin{itemize}
				\item $(M, u) \models [ X \leftarrow x, S \leftarrow s ] \varphi$
				\item There is a non-empty set of variables $W \subseteq V$ and a setting $w$ of $W$ such that $w \ne x$ where $(M_{W \leftarrow w}, u) \models [ X \leftarrow x', S \leftarrow s' ] \psi$
			\end{itemize}
			\item[(c)] $x_{i} \ne x_{i}'$ for each $X_{i} \in X$
			\item[(d)] $\langle X = x, X = x' \rangle$ is maximal relative to CH1(a)--CH1(c)
		\end{description}
		\item[CH2] $\langle X = x, X = x' \rangle$ is minimal relative to CH1
		\item[CH3] $(M, u) \models (X = x) \land \varphi \land \neg \psi$ for some $u \in K$
	\end{description}
	In addition, a contrastive explanation is non-trivial if:
	\begin{description}
		\item[CH4]
		% \begin{description}
		% 	\item[(a)] $(M, u) \models \neg (X = x)$ for some $u \in K$ such that $(M, u) \models \varphi$
		% 	\item[(b)] There is a non-empty set of variables $W \subseteq V$ and a setting $w$ of $W$ such that $w \ne x$ where $(M_{W \leftarrow w}, u) \models \neg (X = x')$ for some $u \in K$ such that $(M_{W \leftarrow w}, u) \models \psi$
		% \end{description}
		\begin{description}
			\item[(a)] $(M, u) \models \neg (X = x) \land \varphi$ for some $u \in K$
			\item[(b)] There is a non-empty set of variables $W \subseteq V$ and a setting $w$ of $W$ such that $w \ne x$ where $(M_{W \leftarrow w}, u) \models \neg (X = x') \land \psi$ for some $u \in K$
		\end{description}
	\end{description}
\end{definition}

Definition~\ref{def:modified} extends EX1'--EX4' in the spirit of CE1--CE4 by replacing causes with contrastive causes and incorporating the foil $\psi$ as appropriate.
The additional conditions in CH1--CH2 are due to what we already know about modified HP explanations, i.e.\ that they do not perfectly capture sufficient causes and instead integrate a restricted notion of sufficient causes within the definition of explanations itself.
As a consequence, there is no convenient definition of contrastive causes that is adequate for Definition~\ref{def:modified}, and the key characteristics from CC1--CC5 must be integrated directly: 
the use of partial causes in CC1/CC3 is handled by CH1(b), 
the difference condition from CC4 is handled by CH1(c), and 
the maximality condition from CC5 is handled by CH1(d).

\begin{definition}[Contrastive explanations: Borner variant]\label{def:borner}
	A pair of conjunctions of primitive events $\langle X = x, X = x' \rangle$ is a potential contrastive explanation of $\langle \varphi, \psi \rangle$ in causal model $M$ relative to epistemic state $K$ if:
	\begin{description}
		\item[CB1--CB2] There is a pair of (possibly empty) conjunctions of primitive events $\langle S = s, S = s' \rangle$ with $X \cap S = \emptyset$ such that for each $u \in K$:
		\begin{description}
			\item[(a)] $\langle X = x \land S = s, X = x' \land S = s' \rangle$ is a contrastive sufficient cause of $\varphi$ in $(M, u)$ if $(M, u) \models (X = x)$
			\item[(b)] $(M, u) \models (S = s)$
			\item[(c)] There is a non-empty set of variables $W \subseteq V$ and a setting $w$ of $W$ such that $w \ne x$ where $(M_{W \leftarrow w}, u) \models (S = s')$
		\end{description}
		\item[CB3--CB4] Same as CH3--CH4
		% \item[CB3] $(M, u) \models (X = x) \land \varphi \land \neg \psi$ for some $u \in K$
		% \item[CB4]
		% % \begin{description}
		% % 	\item[(a)] $(M, u) \models \neg (X = x)$ for some $u \in K$ such that $(M, u) \models \varphi$
		% % 	\item[(b)] There is a non-empty set of variables $W \subseteq V$ and a setting $w$ of $W$ such that $w \ne x$ where $(M_{W \leftarrow w}, u) \models \neg (X = x')$ for some $u \in K$ such that $(M_{W \leftarrow w}, u) \models \psi$
		% % \end{description}
		% \begin{description}
		% 	\item[(a)] $(M, u) \models \neg (X = x) \land \varphi$ for some $u \in K$
		% 	\item[(b)] There is a non-empty set of variables $W \subseteq V$ and a setting $w$ of $W$ such that $w \ne x$ where $(M_{W \leftarrow w}, u) \models \neg (X = x') \land \psi$ for some $u \in K$
		% \end{description}
	\end{description}
	Alternatively, $X = x$ is an actual contrastive explanation if CB1--CB3 and:
	\begin{description}
		\item[CB5] $(M, u) \models (X = x) \land \varphi \land \neg \psi$ for each $u \in K$
	\end{description}
	In addition, a potential contrastive explanation is parsimonious if:
	\begin{description}
		\item[CB6] $\langle X = x, X = x' \rangle$ is minimal relative to CB1--CB2
	\end{description}
\end{definition}

Definition~\ref{def:borner} extends E1--E6 in the spirit of CE1--CE4 by incorporating contrastive causes and the foil as appropriate.
Compared to Definition~\ref{def:modified} this is a more straightforward translation of the non-contrastive definition, since it is able to build on the definitions of contrastive sufficient causes (i.e.\ CC1--CC5 under SC1--SC4).
The biggest change is the inclusion of CB1--CB2(c), which says that E1--E2(b) should also hold for $S = s'$ under an appropriate intervention.

\begin{example}[Example~\ref{ex:overdetermined_forest_fire} cont.]\label{ex:overdetermined_forest_fire_contrastive}
	Consider the causal model and epistemic state from Example~\ref{ex:overdetermined_forest_fire}.\footnote{Note that (contrastive) explanations are not limited to Boolean domains.}
	The explanandum is $\langle \textit{FF}, \neg \textit{FF} \rangle$.
	The Miller contrastive explanations are (i) $\langle \neg \textit{MD}, \textit{MD} \rangle$, (ii) $\langle \textit{MD}, \neg \textit{MD} \rangle$, (iii) $\langle \neg B, B \rangle$, and (iv) $\langle B, \neg B \rangle$.
	The modified HP contrastive explanations are (i) $\langle \textit{MD}, \neg \textit{MD} \rangle$, (ii) $\langle L, \neg L \rangle$, and (iii) $\langle \textit{FF}, \neg \textit{FF} \rangle$.
	Only (i) is non-trivial.
	% The (parsimonious) potential Borner contrastive explanation is $\langle \textit{MD}, \neg \textit{MD} \rangle$.
	The potential Borner contrastive explanations are (i) $\langle \textit{MD}, \neg \textit{MD} \rangle$ and (ii) $\langle B, \neg B \rangle$.
	Both (i) and (ii) are parsimonious.
	The actual Borner contrastive explanations are (i) $\langle L, \neg L \rangle$ and (ii) $\langle \textit{FF}, \neg \textit{FF} \rangle$.
\end{example}

% \begin{example}
% 	Consider the causal model from Example~\ref{ex:dry_forest}.
% 	The Miller contrastive explanations of $\langle \textit{FF}, \neg \textit{FF} \rangle$ relative to $K$ are (i) $\langle \textit{MD}, \neg \textit{MD} \rangle$ and (ii) $\langle L, \neg L \rangle$.
% 	The modified HP contrastive explanations are (i) $\langle \textit{MD}, \neg \textit{MD} \rangle$, (ii) $\langle D, \neg D \rangle$, (iii) $\langle L, \neg L \rangle$, and (iv) $\langle \textit{FF}, \neg \textit{FF} \rangle$.
% 	Both (i) and (iii) are non-trivial.
% 	The Borner potential contrastive explanations are (i) $\langle \textit{MD}, \neg \textit{MD} \rangle$ and (ii) $\langle L, \neg L \rangle$.
% 	Both (i) and (ii) are parsimonious.
% 	The Borner actual contrastive explanations are (i) $\langle D, \neg D \rangle$ and (ii) $\langle \textit{FF}, \neg \textit{FF} \rangle$.
% \end{example}

Example~\ref{ex:overdetermined_forest_fire_contrastive} demonstrates that the definitions of contrastive explanations inherit characteristics of their corresponding non-contrastive definitions, including the potential for counterintuitive results.
Example~\ref{ex:overdetermined_forest_fire} said that $L \land \neg \textit{MD}$ was an original HP explanations of $\textit{FF}$, which we said was counterintuitive because $\neg \textit{MD}$ was seemingly irrelevant, but in Example~\ref{ex:overdetermined_forest_fire_contrastive} we see that $\langle \neg \textit{MD}, \textit{MD} \rangle$ is an original HP contrastive explanations of $\langle \textit{FF}, \neg \textit{FF} \rangle$, which is not only irrelevant but seemingly nonsensical.
Conversely, Example~\ref{ex:overdetermined_forest_fire_cont} said that $L$ was an explanation of $\textit{FF}$ according to both modified HP and (actual) Borner definitions, and here we see intuitively that $\langle L, \neg L \rangle$ is a contrastive explanation of $\langle \textit{FF}, \neg \textit{FF} \rangle$ for corresponding variants.
These examples suggest that the definitions of contrastive explanations not only inherit, but potentially amplify counterintuitive behaviour exhibited by their corresponding non-contrastive definitions.
There is a clear reason for this: Miller proved that his definition of contrastive explanations was equivalent to an \emph{alternative} definition that depends directly on the original HP definition of non-contrastive explanations~\cite{Miller:KER:2021}.
As we will now show, this alternative Miller definition can in fact be understood as a modular definition of contrastive explanations where equivalence also holds for Definitions~\ref{def:modified}--\ref{def:borner} when instantiated with the appropriate non-contrastive definition.
A trivial generalisation of this alternative Miller definition is as follows:

\begin{definition}[Contrastive explanations: modular variant]\label{def:modular}
	A pair of conjunctions of primitive events $\langle X = x, X = x' \rangle$ is a contrastive explanation of $\langle \varphi, \psi \rangle$ in causal model $M$ relative to epistemic state $K$ {under} a given definition of (non-contrastive) explanations if:
	\begin{description}
		\item[CE1'] $X = x$ is a \emph{partial} explanation of $\varphi$ in $M$ relative to~$K$
		\item[CE2'] There is a non-empty set of variables $W \subseteq V$ and a setting $w$ of $W$ such that $X = x'$ is a \emph{partial} explanation of $\psi$ in $M_{W \leftarrow w}$ relative to $K$
		\item[CE3'] $x_{i} \ne x_{i}'$ for each $X_{i} \in X$
		\item[CE4'] $\langle X = x, X = x' \rangle$ is maximal relative to CE1'--CE3'
	\end{description}
\end{definition}

\begin{theorem}\label{thm:original}
	The Miller definition is equivalent to Definition~\ref{def:modular} {under} original HP explanations if $(M, u) \models \neg \varphi \lor \neg \psi$ for each $u \in K$.
\end{theorem}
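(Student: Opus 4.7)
The plan is to prove both directions of the equivalence by aligning the Miller conditions CE1--CE4 with the Modular conditions CE1'--CE4' instantiated with the original HP definition of explanations (EX1--EX4). The hypothesis $(M, u) \models \neg\varphi \lor \neg\psi$ for each $u \in K$ serves as the bridge between the per-variable non-contrastive conditions (which constrain $\varphi$ and $\psi$ separately via EX1) and the tight contrastive coupling of fact and foil required by CE1.

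For the direction Modular $\Rightarrow$ Miller, I would fix witnesses $Y = y \supseteq X = x$ and $Y' = y' \supseteq X = x'$ (together with non-empty $W$ and setting $w$) such that $Y = y$ is an HP explanation of $\varphi$ in $M$ relative to $K$, and $Y' = y'$ is an HP explanation of $\psi$ in $M_{W \leftarrow w}$ relative to $K$. Then CE1 follows from EX1 applied to $Y = y$ (giving $\varphi$ throughout $K$) combined with the hypothesis (giving $\neg\psi$ throughout $K$). The contrastive weak actual cause property CE2 is verified context-by-context for each $u \in K$ with $(M, u) \models X = x$: CC1 and CC3 come from EX2 of $Y = y$ and $Y' = y'$ yielding weak actual causes whose partials contain $X = x$ and $X = x'$ respectively; CC2 is immediate from CE1; CC4 is CE3'; and CC5 follows from CE4' via a contrapositive argument on maximality. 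Minimality CE3 then follows from EX3 minimality of the enclosing $Y = y$ and $Y' = y'$, while CE4(a) and CE4(b) follow from EX4 applied to $Y = y$ and $Y' = y'$ respectively.

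For the reverse direction Miller $\Rightarrow$ Modular, the task is to extract uniform HP explanations from the pointwise weak actual cause guarantees in Miller. I would extend $X = x$ to a minimal $Y = y$ satisfying EX1--EX4 for $\varphi$ in $M$ relative to $K$, leveraging the pointwise witnesses supplied by CC1 across $u \in K$ with $X = x$ holding, together with the minimality in CE3 and the maximality in CC5 to pin down $Y = y$ uniquely. A symmetric construction using CC3 and CE4(b) produces $Y' = y'$ for $\psi$ in $M_{W \leftarrow w}$. Condition CE3' is immediate from CC4 of the underlying contrastive weak actual cause, and CE4' follows from CE3 contrapositively: any strict contrastive-explanation extension at the Modular level would correspond to a non-minimal Miller explanation.

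The main obstacle is the reverse direction, specifically reconciling the per-context witness structure of Miller's CC1/CC3 (where each $u$ may in principle invoke its own weak actual cause witness) with the single-witness structure of an HP explanation required by CE1' and CE2'. The theorem's hypothesis is essential here: by ensuring no context in $K$ has both $\varphi$ and $\psi$ true simultaneously, it keeps the fact-side and foil-side constructions from interfering when amalgamating pointwise witnesses into a uniform HP explanation. A secondary subtlety is that the non-empty $W$ and setting $w$ for the foil must be chosen uniformly across $K$ rather than per context, which follows from applying Miller's CE2 at the level of the pair rather than pointwise, but requires care to maintain consistency with the maximality in CC5.
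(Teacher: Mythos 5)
Your proposal is correct and follows essentially the same route as the paper's proof of Lemma~\ref{lem:original}: a bidirectional, condition-by-condition correspondence between CE1--CE4 and CE1'--CE4' under EX1--EX4 (CC1/CC3 from EX2, CC4 from CE3', CC5 from CE4' by contraposition, CE3 from EX3, CE4 from EX4), with the incompatibility hypothesis used exactly where the paper uses it, namely to derive $\neg\psi$ for CE1 and CC2 in the Modular-to-Miller direction. The uniformity subtleties you flag (a single witness $Y = y$ and a single $W \leftarrow w$ across all contexts in $K$) are likewise present, and handled equally implicitly, in the paper's own argument.
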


\begin{proof}
	The theorem requires that CE1--CE4 is equivalent to CE1'--CE4' {under} EX1--EX4 if the theorem condition holds, i.e.\ $(M, u) \models \neg \varphi \lor \neg \psi$ for each $u \in K$.
	The proof\hspace{0.05em}\footnote{Full proofs are available in the appendix.} is the same as for Theorem~6 in~\cite{Miller:KER:2021} except that the theorem condition makes explicit an assumption that was implicit in Miller's proof; this assumption is required to prove (in the right-to-left case) that if a pair satisfies CE1'--CE4' {under} EX1--EX4 then it must also satisfy CE1 and CC2 via CE2.
	\qed
\end{proof}

\begin{theorem}\label{thm:modified}
	Definition~\ref{def:modified} is equivalent to Definition~\ref{def:modular} {under} modified HP explanations if $(M, u) \models \neg \varphi \lor \neg \psi$ for each $u \in K$.
\end{theorem}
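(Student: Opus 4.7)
The plan is to adapt Miller's equivalence argument (used in the proof of Theorem~\ref{thm:original}) to the modified HP setting by substituting EX1'--EX4' for EX1--EX4 throughout, and to verify both implications. As in Theorem~\ref{thm:original}, the hypothesis $(M, u) \models \neg \varphi \lor \neg \psi$ for each $u \in K$ aligns the fact-side quantifier scopes: any plausible context witnessing $\varphi$ also witnesses $\neg \psi$, so CH1(a)'s scope (contexts where $\varphi \wedge \neg \psi$ holds) coincides on $K$ with EX1'(a)'s scope (contexts where $\varphi$ holds). The non-trivial case CH4 $\leftrightarrow$ EX4' is handled analogously throughout.

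For the forward direction, given $\langle X = x, X = x' \rangle$ satisfying CH1--CH3, I would construct a modified HP explanation $Y = y$ of $\varphi$ with $X = x \subseteq Y = y$ by augmenting $X = x$ with the auxiliary conjuncts $S = s$ supplied by the first bullet of CH1(b): this delivers EX1'(b), CH1(a) delivers EX1'(a) under the hypothesis, and CH3 delivers EX3'. Trimming $Y = y$ down to an EX2'-minimal explanation while preserving $X \subseteq Y$ then establishes CE1'; a symmetric construction on $M_{W \leftarrow w}$ using the second bullet of CH1(b) establishes CE2' for the foil; CE3' is CH1(c); and CE4' follows from CH1(d) combined with CH2. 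For the reverse direction, given partial modified HP explanations $Y = y \supseteq X = x$ of $\varphi$ in $M$ and $Y' = y' \supseteq X = x'$ of $\psi$ in $M_{W \leftarrow w}$, I would recover $\langle S = s, S = s' \rangle$ for CH1(b) by reading off the assignments of $Y \setminus X$ in $Y = y$ and of $Y' \setminus X$ in $Y' = y'$, then use EX1'(a) under the hypothesis, together with CC2 ($\neg \psi$) and CC3 (delivered by the intervention $W \leftarrow w$), to obtain the part-of-contrastive-actual-cause conjunct required by CH1(a). CH1(c) is CE3', and CH1(d) together with CH2 is read off CE4' paired with EX2' applied on both sides.

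The step I expect to be hardest is the bookkeeping around CH2's minimality and CH1(d)'s maximality, and showing that these align cleanly with CE4' combined with EX2' on both fact and foil sides simultaneously. In the forward direction one must verify that trimming $Y = y$ to an EX2'-minimal modified HP explanation never removes a conjunct $X_i = x_i$ of $X = x$; this should follow because, by CH2, each such conjunct is essential for sustaining CH1(b) in at least one context of $K$, and therefore essential for EX1'(b) of the extension $Y = y$. In the reverse direction the symmetric argument must show that contracting or expanding the pair on one side cannot silently restore CH1(a)--CH1(c) on the other, which is where the theorem hypothesis does the same alignment work that was implicit in Miller's original proof and made explicit in Theorem~\ref{thm:original}.
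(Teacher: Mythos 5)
Your proposal follows essentially the same route as the paper: it adapts the Lemma~\ref{lem:original} equivalence argument by substituting EX1'--EX4' for EX1--EX4, uses the incompatibility hypothesis to align the quantifier scopes of CH1(a)/CH3 with EX1'(a)/EX3', builds the non-contrastive explanation $Y = y$ as $(X = x) \land (S = s)$ from CH1(b) (and symmetrically for the foil under $W \leftarrow w$), and handles the minimality and maximality clauses by the same contradiction-style bookkeeping the paper uses in Lemmas~\ref{lem:modified:basic} and~\ref{lem:modified:non-trivial}. Your explicit attention to why trimming to an EX2'-minimal explanation cannot delete conjuncts of $X = x$ is a point the paper passes over more quickly, but it does not change the structure of the argument.
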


\begin{proof}
	For modified HP contrastive explanations, the theorem requires that CH1--CH3 is equivalent to CE1'--CE4' {under} EX1'--EX3' if the theorem condition holds.
	Non-trivial modified HP contrastive explanations require that CH1--CH4 is equivalent to CE1'--CE4' {under} EX1'--EX4' if the condition holds.
	The proofs follow the same approach as the proof for Theorem~\ref{thm:original}.
	\qed
\end{proof}

\begin{theorem}\label{thm:borner}
	Definition~\ref{def:borner} is equivalent to Definition~\ref{def:modular} {under} Borner explanations if $(M, u) \models \neg \varphi \lor \neg \psi$ for each $u \in K$.
\end{theorem}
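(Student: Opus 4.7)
The plan is to mirror the strategy used for Theorems~\ref{thm:original}--\ref{thm:modified}: unpack both sides of the equivalence into their base conditions, then match the Borner-side clauses CB1--CB2 (together with CB3 and, depending on the variant, CB4, CB5, or CB6) against the modular clauses CE1'--CE4' instantiated with Borner non-contrastive explanations E1--E6. Because Definition~\ref{def:borner} has three flavours (potential, parsimonious potential, actual), the theorem really packages three equivalences: CB1--CB3 $\Leftrightarrow$ CE1'--CE4' under E1--E3; CB1--CB3 $+$ CB6 $\Leftrightarrow$ CE1'--CE4' under E1--E3 $+$ E6; and CB1--CB3 $+$ CB5 $\Leftrightarrow$ CE1'--CE4' under E1--E3 $+$ E5. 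The non-triviality clause CB4 $\Leftrightarrow$ E4 contrastively is an easy add-on that I would dispatch last. I would handle the potential case in full and then indicate briefly that the parsimonious and actual cases require only inserting the matching minimality/certainty clause on both sides.

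For the ($\Rightarrow$) direction I would start with a pair $\langle X = x, X = x' \rangle$ satisfying CB1--CB3 and exhibit witnesses for CE1'--CE4'. The bridging conjunctions $\langle S = s, S = s' \rangle$ in CB1--CB2 are exactly the extra conjuncts E1--E2 permits when forming a Borner explanation: CB1--CB2(a) gives a contrastive sufficient cause and hence, via SC2, a partial actual cause of both $\varphi$ (in $(M,u)$) and $\psi$ (in $(M_{W\leftarrow w}, u)$); together with CB1--CB2(b,c) these witness that $X = x$ is a partial Borner explanation of $\varphi$ and $X = x'$ is a partial Borner explanation of $\psi$ in the intervened model, delivering CE1' and CE2'. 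CE3' is immediate from CB1--CB2(a) (the difference condition is baked into contrastive sufficient causes via CC4), and CE4' follows from the maximality that is implicit in CB1--CB2(a) via SC4 combined with CB3 playing the role of E3.

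For ($\Leftarrow$), given $\langle X = x, X = x' \rangle$ satisfying CE1'--CE4' under Borner, I would extract from CE1' the bridging conjunction $S = s$ witnessing that $X = x$ together with $S = s$ is a Borner explanation of $\varphi$, and from CE2' an analogous $S' = s'$ and intervention $W \leftarrow w$ for $\psi$. The subtle step is to show that the two bridging conjunctions can be aligned on the \emph{same} set of variables $S$ (disjoint from $X$) so that they jointly satisfy the conjunct-wise CB1--CB2(a,b,c); here I would rename/union variables, using CE3' to keep the $X$-conjuncts separated from the $S$-conjuncts and using the theorem's precondition $(M, u) \models \neg \varphi \lor \neg \psi$ to rule out the degenerate contexts where both fact and foil would hold and force the sufficient cause conditions to collapse. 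CB3 then falls out of E3 applied to $\varphi$ together with the intervention bringing about $\neg \psi$.

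The main obstacle I anticipate is precisely this bridging step: the Borner definition lets the sufficient-cause witness $S = s$ vary independently for the fact and for the foil, while CB1--CB2 requires a \emph{single} variable set $S$ with two settings $s, s'$. Reconciling the two presentations without losing minimality (relevant for the parsimonious variant via CB6) is where I expect the real work to lie, and it is also where the precondition $(M,u) \models \neg \varphi \lor \neg \psi$ is indispensable --- just as in Theorems~\ref{thm:original}--\ref{thm:modified} --- because otherwise the foil-side explanation could appeal to contexts in which $\varphi$ already holds, blocking the translation back into CB1--CB2(c). Once that reconciliation is carried out for the potential case, extending to the parsimonious and actual variants is a direct check that CB6 corresponds to E6 and CB5 to E5 under the same mapping.
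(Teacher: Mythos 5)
Your overall route is the same as the paper's: split the theorem into three lemmas (potential, actual, parsimonious potential), and in each prove the equivalence by matching the CB-clauses against the CE$'$-clauses instantiated with the corresponding E-clauses, using the bridging pair $\langle S = s, S = s' \rangle$ from CB1--CB2 as the extra conjuncts that E1--E2 permits. Your forward direction is essentially the paper's. Two attributions are off, though. First, CE4$'$ (maximality of the pair) is delivered by CC5, the maximality clause of contrastive causes inside CB1--CB2(a), not by SC4 --- SC4 is a \emph{minimality} clause on sufficient causes and cannot yield maximality of $\langle X = x, X = x' \rangle$. Second, and more substantively, you misplace where the precondition $(M, u) \models \neg \varphi \lor \neg \psi$ does its work in the backward direction. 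It is not needed for CB1--CB2(c), which follows directly from CE2$'$ via E1--E2(b) in the intervened model. It is needed (i) to establish CC2, i.e.\ $(M, u) \models \neg \psi$, as part of the contrastive sufficient cause required by CB1--CB2(a) (SC1 gives $\varphi$, and incompatibility then gives $\neg\psi$), and (ii) to upgrade E3 and E5, which only assert $(M, u) \models (X = x) \land \varphi$, to CB3 and CB5, which demand $(X = x) \land \varphi \land \neg \psi$. Your proposed derivation of CB3 from ``E3 together with the intervention bringing about $\neg\psi$'' does not work: CB3 is a statement about the un-intervened model, so no intervention can supply the $\neg\psi$ conjunct there; only the incompatibility assumption can.

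On the alignment of the two bridging conjunctions in the backward direction: you are right that CE1$'$ and CE2$'$ each come with their own sufficient-cause witness and that CB1--CB2 demands a single variable set $S$ with two settings. The paper's sketch glosses over this and simply asserts the existence of the pair from CE1$'$ and CE2$'$, so your flagging of it as the locus of real work is a fair (indeed more careful) reading; just note that the tool for resolving it is not the incompatibility precondition, which, as above, is consumed elsewhere.
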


\begin{proof}
	For potential Borner contrastive explanations, the theorem requires that CB1--CB4 is equivalent to CE1'--CE4' {under} E1--E4 if the theorem condition holds.
	Actual Borner contrastive explanations require that CB1--CB3, CB5 is equivalent to CE1'--CE4' {under} E1--E3, E5 if the condition holds.
	Parsimonious potential Borner contrastive explanations require that CB1--CB4, CB6 is equivalent to CE1'--CE4' {under} E1--E4, E6 if the condition holds.
	Again the proofs follow the same approach as the proof for Theorem~\ref{thm:original}.
	\qed
\end{proof}

% Full proofs are included in supplementary material.
Theorem~\ref{thm:original} is just a restating of the previous result from Miller except that we have added a condition that formalises his assumption of \emph{incompatibility} between fact and foil~\cite[Section 4.1]{Miller:KER:2021}; this assumption is necessary for the theorem to hold.
Theorems~\ref{thm:modified}--\ref{thm:borner} then show that the result generalises for the two other variants.
On the one hand, these theorems demonstrate that Definitions~\ref{def:modified}--\ref{def:borner} successfully capture the modified HP and Borner definitions.
On the other hand, they demonstrate that Definitions~\ref{def:modified}--\ref{def:borner} also successfully capture the Miller definition.
This suggests both a strength and weakness of the Miller definition: Definition~\ref{def:modular} offers a conceptually simpler interpretation of contrastive explanations, yet also highlights the elevated status of non-contrastive explanations.
Therefore it is crucial to choose a non-contrastive definition that is robust.

% Theorems~\ref{thm:modified} and~\ref{thm:borner} show\footnote{Proofs included in supplementary material.} the result from~\cite{Miller:KER:2021} extends to our new definitions, supporting a modular interpretation of the Miller definition.
% Our new definitions thus capture the spirit of contrastive explanations by Miller, while exploiting the improvements offered by the modified HP and Borner definitions.
% These findings are conditional on Miller's assumption of \emph{incompatibility} between fact and foil, not previously formalised, so a key how to formalise this assumption in a way that is compatible with all three variants.

\section{Conclusion}\label{sec:conclusion}
In this paper we demonstrated generalisability of Miller's definition of contrastive explanations, which was previously bound to the original HP definition of non-contrastive explanations.
We showed that there are at least two other variants of the Miller definition, each derived from a different definition of non-contrastive explanations.
All three variants have a unified modular interpretation {under} the relevant non-contrastive definition.
However, if the underlying non-contrastive definition yields counterintuitive results (as with the original HP definition), then these are inherited by the contrastive definition.
Our new variants address this by supporting non-contrastive definitions that are more robust than the original HP definition without changing the spirit of the Miller definition.
This conclusion implies that future research may focus on developing more robust definitions of non-contrastive explanations, insofar as one accepts Miller's original definition.

% From a logical perspective, the expressiveness of the language outlined in Section~\ref{sec:sem} may appear rather limited.
% From a causal perspective, however, this language has proved sufficient to capture a broad range of examples from the literature.
% Various extensions have also been proposed to address situations where structural equation models do in fact prove insufficient, including the introduction of norms, typicality, and probability.
% For a detailed discussion we refer the reader to~\cite[Chapter 4]{Halpern:book:2016}.

% Various frameworks suitable for non-monotonic reasoning about actions and events, e.g.~\cite{Kowalski:NGC:1986,McCarthy:MI:1969,Sandewall:book:1995}.

We suggested in Section~\ref{sec:introduction} that formal definitions of contrastive explanations offer an interesting foundation for theoretical analyses of existing work in XAI.
One example that is particularly well-suited to such an analysis is an approach to XAI for machine learning known as counterfactual explanations~\cite{Wachter:JOLT:2018,Mothilal:FAT:2020}, which we will abbreviate here as ML-CEs.
The standard formulation~\cite{Verma:ML-RSA:2020} is as follows: if $f : X \to Y$ is a trained classifier, $x \in X$ is a datapoint such that $f(x) = y$ is the prediction for $x$, and $y' \in Y$ is an alternative (counterfactual) prediction such that $y \ne y'$, then an ML-CE is a datapoint $x' \in X$ such that $f(x') = y'$.
The choice of $x'$ from the range of valid ML-CEs is based on some \emph{goodness} heuristic (constructed from e.g.\ Manhattan distance, Hamming distance, or closeness to the data manifold), yet these heuristics often have little theoretical or empirical justification~\cite{Keane:IJCAI:2021}.
Nonetheless, ML-CEs have gained significant prominence in XAI since first introduced in 2018, with recent surveys having identified over 350 papers on the topic~\cite{Keane:IJCAI:2021,Verma:arXiv:2022}.
The particular relevance of ML-CEs to our work is that they can be easily captured by structural equation models; the alteration of feature values from $x$ to $x'$ can then be interpreted as a form of causal attribution (as per the HP definition of actual causes).
What remains then is to understand to what extent these causes can be regarded as explanations, and what role is served by the various heuristics.
Some initial work in this direction has been completed by Mothilal et al.~\cite{Mothilal:AIES:2021} who proposed a bespoke definition of (non-contrastive) explanations and showed that it could provide a unified interpretation of ML-CEs and feature attribution methods in XAI~\cite{Ribeiro:KDD:2016,Lundberg:NIPS:2017}.
However, broader justifications for this bespoke definition remain unclear (e.g.\ applied to standard examples it appears less robust than the HP and Borner definitions), nor does it consider how ML-CEs can be understood as contrastive explanations.

\subsubsection{Acknowledgements}
This work received funding from EPSRC CHAI project (EP/T026820/1).
The authors thank Tim Miller for clarifications on~\cite{Miller:KER:2021}.

\bibliographystyle{splncs04}
\bibliography{ecsqaru2023}

\begin{thebibliography}{10}
\providecommand{\url}[1]{\texttt{#1}}
\providecommand{\urlprefix}{URL }
\providecommand{\doi}[1]{https://doi.org/#1}

\bibitem{Adadi:IEEEA:2018}
Adadi, A., Berrada, M.: Peeking inside the black-box: a survey on explainable artificial intelligence ({XAI}). IEEE Access  \textbf{6},  52138--52160 (2018)

\bibitem{Arrieta:IF:2020}
Arrieta, A.B., et~al.: Explainable artificial intelligence {(XAI}): Concepts, taxonomies, opportunities and challenges toward responsible {AI}. Information Fusion  \textbf{58},  82--115 (2020)

\bibitem{Barocas:FAT:2020}
Barocas, S., Selbst, A.D., Raghavan, M.: The hidden assumptions behind counterfactual explanations and principal reasons. In: Proceedings of the 3rd Conference on Fairness, Accountability, and Transparency ({FAT}'20). pp. 80--89 (2020)

\bibitem{Borner:BJPS:2021}
Borner, J.: Halpern and {P}earl's definition of explanation amended. The British Journal for the Philosophy of Science  (to appear)

\bibitem{Gardenfors:book:1988}
G{\"a}rdenfors, P.: Knowledge in flux: Modeling the dynamics of epistemic states. MIT press (1988)

\bibitem{Halpern:IJCAI:2015}
Halpern, J.Y.: A modification of the {H}alpern-{P}earl definition of causality. In: Proceedings of the 24th International Joint Conference on Artificial Intelligence ({IJCAI}'15). pp. 3022--3033 (2015)

\bibitem{Halpern:book:2016}
Halpern, J.Y.: Actual causality. MIT Press (2016)

\bibitem{Halpern:BJPS:2005a}
Halpern, J.Y., Pearl, J.: Causes and explanations: A structural-model approach. {Part I}: Causes. The British Journal for the Philosophy of Science  \textbf{56}(4),  843--887 (2005)

\bibitem{Halpern:BJPS:2005b}
Halpern, J.Y., Pearl, J.: Causes and explanations: A structural-model approach. {Part II}: Explanations. The British Journal for the Philosophy of Science  \textbf{56}(4),  889--911 (2005)

\bibitem{Hart:book:1985}
Hart, H.L.A., Honor{\'e}, T.: Causation in the Law. Oxford University Press (1985)

\bibitem{Hintikka:book:1962}
Hintikka, K.J.J.: Knowledge and belief: An introduction to the logic of the two notions. Cornell University Press (1962)

\bibitem{Keane:IJCAI:2021}
Keane, M.T., Kenny, E.M., Delaney, E., Smyth, B.: If only we had better counterfactual explanations: Five key deficits to rectify in the evaluation of counterfactual {XAI} techniques. In: Proceedings of the 30th International Joint Conference on Artificial Intelligence ({IJCAI}'21). pp. 4466--4474 (2021)

\bibitem{Lewis:chapter:1986}
Lewis, D.: Causal explanation. In: Philosophical Papers: Volume II, pp. 214--240. Oxford University Press (1986)

\bibitem{Lipton:RIPS:1990}
Lipton, P.: Contrastive explanation. Royal Institute of Philosophy Supplements  \textbf{27},  247--266 (1990)

\bibitem{Lundberg:NIPS:2017}
Lundberg, S.M., Lee, S.I.: A unified approach to interpreting model predictions. In: Proceedings of the 31st International Conference on Neural Information Processing Systems ({NIPS}'17). pp. 4768--4777 (2017)

\bibitem{Miller:AIJ:2019}
Miller, T.: Explanation in artificial intelligence: Insights from the social sciences. Artificial Intelligence  \textbf{267},  1--38 (2019)

\bibitem{Miller:KER:2021}
Miller, T.: Contrastive explanation: A structural-model approach. The Knowledge Engineering Review  \textbf{36}, ~e14 (2021)

\bibitem{Mothilal:FAT:2020}
Mothilal, R.K., Sharma, A., Tan, C.: Explaining machine learning classifiers through diverse counterfactual explanations. In: Proceedings of the 2020 Conference on Fairness, Accountability, and Transparency ({FAT}'20). pp. 607--617 (2020)

\bibitem{Mothilal:AIES:2021}
Mothilal, R.K., Mahajan, D., Tan, C., Sharma, A.: Towards unifying feature attribution and counterfactual explanations: Different means to the same end. In: Proceedings of the 2021 AAAI/ACM Conference on AI, Ethics, and Society ({AIES}'21). pp. 652--663 (2021)

\bibitem{Pearl:book:2000}
Pearl, J.: Causality: Models, reasoning and inference. Cambridge University Press (2000)

\bibitem{Ribeiro:KDD:2016}
Ribeiro, M.T., Singh, S., Guestrin, C.: ``{W}hy should i trust you?'' {E}xplaining the predictions of any classifier. In: Proceedings of the 22nd ACM SIGKDD International Conference on Knowledge Discovery and Data Mining ({KDD}'16). pp. 1135--1144 (2016)

\bibitem{Verma:arXiv:2022}
Verma, S., Boonsanong, V., Hoang, M., Hines, K.E., Dickerson, J.P., Shah, C.: Counterfactual explanations and algorithmic recourses for machine learning: A review. arXiv:2010.10596  (2022)

\bibitem{Verma:ML-RSA:2020}
Verma, S., Dickerson, J., Hines, K.: Counterfactual explanations for machine learning: A review. In: Proceedings of the {NeurIPS}'20 Workshop on ML-Retrospectives, Surveys \& Meta-Analyses ({ML-RSA}'20) (2020)

\bibitem{Wachter:JOLT:2018}
Wachter, S., Mittelstadt, B., Russell, C.: Counterfactual explanations without opening the black box: Automated decisions and the {GDPR}. Harvard Journal of Law \& Technology  \textbf{31}(2),  841--888 (2018)

\end{thebibliography}

\renewcommand*{\proofname}{Proof}

\section*{Proofs}
Let $\langle X = x, X = x' \rangle$ be a conjunction of primitive events, $\langle \varphi, \psi \rangle$ be a pair of events, $M$ be a causal model, and $K$ be an epistemic state.

\subsection*{Original HP explanations}

\begin{lemma}\label{lem:original}
	CE1--CE4 is equivalent to CE1'--CE4' {under} EX1--EX4 if $(M, u) \models \neg \varphi \lor \neg \psi$ for each $u \in K$.
\end{lemma}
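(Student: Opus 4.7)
The plan is to prove the equivalence in both directions by unpacking the definitions and aligning the existential witnesses (the intervention $W \leftarrow w$, the minimal supersets that complete partial explanations, the plausible contexts at which $X = x$ or $X = x'$ hold) across the two formulations. The argument is essentially the one Miller used to establish his Theorem~6, so I would follow that structure; the only new ingredient is to identify where his proof tacitly relies on the incompatibility of fact and foil, and to plug exactly the theorem condition into that gap.

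For the left-to-right direction, I would assume $\langle X = x, X = x' \rangle$ satisfies CE1--CE4 and derive CE1'--CE4' {under} EX1--EX4. CE1 immediately gives $\varphi$ in every $u \in K$ (needed for EX1) together with $\neg \psi$. From CE2, in each $u \in K$ with $(M, u) \models (X = x)$, I can extract a partial weak actual cause of $\varphi$ inside $X = x$ and, under some intervention $W \leftarrow w$, a partial weak actual cause of $\psi$ inside $X = x'$. Extending these to minimal witnesses that also satisfy the relevant EX3/EX4 conditions yields an original HP explanation of $\varphi$ and an original HP explanation of $\psi$ in $M_{W \leftarrow w}$, of which $X = x$ and $X = x'$ are partial, delivering CE1' and CE2' with the same $W \leftarrow w$. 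CE3 maps to CE3' directly, and CE4 together with the maximality built into CC5 yields CE4'.

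For the right-to-left direction, I would assume CE1'--CE4' {under} EX1--EX4 and derive CE1--CE4. Applying EX1 to the partial explanation witnessing CE1' gives $\varphi$ in every $u \in K$; applying EX1 to the partial explanation of $\psi$ witnessing CE2' gives $\psi$ in every $u \in K$ under $M_{W \leftarrow w}$. The delicate step, where the theorem condition becomes unavoidable, is recovering $\neg \psi$ in every $u \in K$ without the intervention, which is exactly the second conjunct of CE1. This does not follow from CE1'--CE4' alone, since CE2' only constrains the intervened model; but combining $\varphi$ in every $u \in K$ with the assumption $(M, u) \models \neg \varphi \lor \neg \psi$ forces $\neg \psi$ everywhere in $K$. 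Once CE1 is in hand, partial explanations translate back into weak actual causes via EX2 to give CE2, CE3' yields the difference condition inside CE2, and CE3, CE4 drop out of the minimality/maximality correspondences with CE3', CE4'.

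The main obstacle is precisely this right-to-left incompatibility step: without the theorem condition, nothing in CE1'--CE4' rules out a plausible $u \in K$ at which the actual world satisfies both $\varphi$ and $\psi$, which would violate CE1 even though CE2' only asks about the intervened model $M_{W \leftarrow w}$. Everything else is routine bookkeeping of quantifiers, intervention witnesses, and minimal/maximal supersets, mirroring Miller's Theorem~6 argument with the incompatibility assumption now made explicit rather than implicit.
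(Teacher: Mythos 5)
Your proposal follows essentially the same route as the paper's proof: both directions proceed by unpacking the definitions and aligning the witnesses (the intervention $W \leftarrow w$, the completed weak actual causes/explanations, the minimality and maximality clauses), and you correctly pinpoint the one place where the incompatibility condition $(M, u) \models \neg \varphi \lor \neg \psi$ is indispensable --- recovering $\neg \psi$ in every $u \in K$ (for CE1 and for CC2 inside CE2) in the right-to-left direction --- exactly as the paper does. The only nit is a labelling slip in your forward direction: CE3' (the difference condition) is obtained from CC4 inside CE2 rather than from CE3, which instead feeds the minimality/maximality bookkeeping, but this does not affect the substance.
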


\begin{proof}
	($\rightarrow$)
	If $\langle X = x, X = x' \rangle$ satisfies CE1--CE4 for $\langle \varphi, \psi \rangle$ in $M$ relative to $K$ then $\langle X = x, X = x' \rangle$ satisfies CE1'--CE4' {under} EX1--EX4 in $M$ relative to $K$:
	\begin{description}
		\item[CE1'] This is satisfied if there is some conjunction $(Y = y)$ that satisfies EX1--EX4 for $\varphi$ in $M$ relative to $K$ such that $(X = x) \subseteq (Y = y)$:
		\begin{description}
			\item[EX1] This is satisfied directly by CE1 (irrespective of $X = x$).
			\item[EX2] This is satisfied by CE2 via CC1.
			If CE2 is satisfied then $X = x$ must be a partial weak actual cause of $\varphi$ in $(M, u)$ for each $u \in K$ such that $(M, u) \models (X = x)$.
			Since a partial cause is defined as a subset of a cause, there must be some $Y = y$ that is a weak actual cause of $\varphi$ in $(M, u)$ for each $u \in K$ such that $(M, u) \models (X = x)$.
			\item[EX3] This is satisfied by CE3.
			Suppose $X = x$ is not minimal.
			This implies there is some $Y = y$ satisfying EX2 such that $(Y = y) \subset (X = x)$.
			According to CE2 via CC3 and CC4 there must also be some $Y = y'$ such that $\langle Y = y, Y = y' \rangle$ satisfies CE2.
			It follows that $\langle X = x, X = x' \rangle$ could not be minimal according to CE3.
			This contradicts the premise that $\langle X = x, X = x' \rangle$ satisfies CE1--CE4.
			\item[EX4] This is satisfied by CE2 and CE4(a).
			Firstly, if $(M, u) \models \neg (X = x)$ for some $u \in K$ then it must be that $(M, u) \models \neg (Y = y)$ since $\neg (X = x) \vdash \neg (Y = y)$.
			Secondly, if $Y = y$ is the weak actual cause mentioned in the proof for EX2 then it must be that $Y = y$ satisfies AC1 for some $u' \in K$, i.e.\ $(M, u') \models (Y = y) \land \varphi$ for some $u' \in K$.
		\end{description}
		\item[CE2'] The proof is the same as for CE1' except that $W \leftarrow w$ is the intervention mentioned by CE2 via CC3 and by CE4(b).
		\item[CE3'] This is satisfied directly by CE2 via CC4.
		\item[CE4'] This is satisfied by CE3.
		Suppose $\langle X = x, X = x' \rangle$ is not maximal.
		This implies there is some $\langle Y = y, Y = y' \rangle$ satisfying CE1'--CE3' such that $(X = x) \subset (Y = y)$ and $(X = x') \subset (Y = y')$.
		It follows that $\langle X = x, X = x' \rangle$ could not be maximal according to CE2 via CC5.
		This contradicts the premise that $\langle X = x, X = x' \rangle$ satisfies CE1--CE4.
	\end{description}
	
	($\leftarrow$)
	If $\langle X = x, X = x' \rangle$ satisfies CE1'--CE4' {under} EX1--EX4 for $\langle \varphi, \psi \rangle$ in $M$ relative to $K$ then $\langle X = x, X = x' \rangle$ satisfies CE1--CE4 in $M$ relative to $K$:
	\begin{description}
		\item[CE1] This is satisfied by CE1' (and CE2') via EX1 combined with the premise of this lemma.
		If $(M, u) \models \varphi$ and $(M, u) \models \neg \varphi \lor \neg \psi$ for each $u \in K$ then it must be that $(M, u) \models \varphi \land \neg \psi$ for each $u \in K$  since $\varphi \land (\neg \varphi \lor \neg \psi) \vdash (\varphi \land \neg \psi)$. 
		\item[CE2] This is satisfied if $\langle X = x, X = x' \rangle$ satisfies CC1--CC5 for $\langle \varphi, \psi \rangle$ in $(M, u)$ for each $u \in K$ such that $(M, u) \models (X = x)$, except that references to actual causes in CC1--CC5 are replaced by weak actual causes:
		\begin{description}
			\item[CC1] This is satisfied directly by CE1' via EX2.
			\item[CC2] This is satisfied by CE1' via EX1 combined with the premise of this lemma.
			If $(M, u) \models \varphi$ and $(M, u) \models \neg \varphi \lor \neg \psi$ then it must be that $(M, u) \models \neg \psi$ since $\varphi \land (\neg \varphi \lor \neg \psi) \vdash \neg \psi$.
			\item[CC3] This is satisfied directly by CE2' via EX2.
			\item[CC4] This is satisfied directly by CE3'.
			\item[CC5] This is satisfied by CE4'.
			Suppose $\langle X = x, X = x' \rangle$ is not maximal.
			This implies there is some $\langle Y = y, Y = y' \rangle$ satisfying CC1--CC4 such that $(X = x) \subset (Y = y)$ and $(X = x') \subset (Y = y')$.
			It follows that $\langle X = x, X = x' \rangle$ could not be maximal according to CE4'.
			This contradicts the premise that $\langle X = x, X = x' \rangle$ satisfies CE1'--CE4' {under} EX1--EX4.
		\end{description}
		\item[CE3] This is satisfied by CE1' and CE2' via EX3.
		Suppose $\langle X = x, X = x' \rangle$ is not minimal.
		This implies there is some $\langle Y = y, Y = y' \rangle$ satisfying CE2 such that $(Y = y) \subset (X = x)$ and  $(Y = y') \subset (X = x')$.
		It follows that $X = x$ could not be minimal according to CE1' via EX3 and/or that $X = x'$ could not be minimal according to CE2' via EX3.
		This contradicts the premise that $\langle X = x, X = x' \rangle$ satisfies CE1'--CE4' {under} EX1--EX4.
		\item[CE4]
		\begin{description}
			\item[(a)] This is satisfied directly by CE1' via EX4.
			\item[(b)] This is satisfied directly by CE2' via EX4. \qed
		\end{description}
	\end{description}
\end{proof}

\begin{theoremrecall}{\ref{thm:original}}
	The Miller definition is equivalent to Definition~\ref{def:modular} {under} original HP explanations if $(M, u) \models \neg \varphi \lor \neg \psi$ for each $u \in K$.
\end{theoremrecall}

\begin{proof}
	This follows directly from Lemma~\ref{lem:original} (original HP contrastive explanations). \qed
\end{proof}

% \clearpage
\subsection*{Modified HP explanations}

\begin{lemma}\label{lem:modified:basic}
	CH1--CH3 is equivalent to CE1'--CE4' {under} EX1'--EX3' if $(M, u) \models \neg \varphi \lor \neg \psi$ for each $u \in K$.
\end{lemma}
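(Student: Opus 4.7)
The plan is to follow the two-direction template of Lemma~\ref{lem:original}, proving ($\rightarrow$) CH1--CH3 implies CE1'--CE4' \emph{under} EX1'--EX3', and ($\leftarrow$) the converse, invoking the incompatibility premise $(M, u) \models \neg \varphi \lor \neg \psi$ exactly where the argument would otherwise need to upgrade $\varphi$ to $\varphi \land \neg \psi$. The conceptual bridge is identical to Lemma~\ref{lem:original}: a partial explanation in Definition~\ref{def:modular} is a subset of some full non-contrastive explanation satisfying EX1'--EX3', so the task reduces to matching each sub-clause of CH1 and CH2--CH3 against EX1'--EX3' applied once for $\varphi$ and once for $\psi$ under the intervention $W \leftarrow w$, plus translating the purely contrastive sub-clauses CH1(c)--(d) to CE3'--CE4'.

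For the forward direction, I would map CH1(a) to EX1'(a) on each side by projecting a part $\langle X_i = x_i, X_i = x_i' \rangle$ of a contrastive actual cause onto its fact component (a part of an actual cause of $\varphi$ in $(M, u)$) and its foil component (a part of an actual cause of $\psi$ in $(M_{W \leftarrow w}, u)$), as guaranteed by CC1, CC3, and CC4. The sufficiency clauses inside CH1(b) --- one for $\varphi$ under $X \leftarrow x, S \leftarrow s$ and one for $\psi$ under $X \leftarrow x', S \leftarrow s'$ modulo $W \leftarrow w$ --- then discharge EX1'(b) for the partial-explanation witnesses $Y = y \supseteq X = x$ and $Y = y' \supseteq X = x'$ obtained by adjoining $S$-conjuncts as needed. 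CE3' is immediate from CH1(c), and CE4' follows by contradiction: any strict extension $\langle Z = z, Z = z' \rangle$ of $\langle X = x, X = x' \rangle$ satisfying CE1'--CE3' would also satisfy CH1(a)--(c), contradicting CH1(d). Finally, CH3 yields EX3' for $\varphi$ directly, and for $\psi$ via the intervention $W \leftarrow w$ after noting that the incompatibility premise excludes $(M, u) \models \psi$ at the witnessing $u$.

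The backward direction is more delicate. To recover CH1(a), I would invoke CE1'--CE2' to obtain full non-contrastive explanations $Y = y \supseteq X = x$ of $\varphi$ and $Y = y' \supseteq X = x'$ of $\psi$ (in $M_{W \leftarrow w}$), then apply EX1'(a) pointwise in any $u \in K$ with $(M, u) \models (X = x) \land \varphi \land \neg \psi$; the incompatibility premise enters here to promote EX1'(a)'s antecedent $(Y = y) \land \varphi$ into $(Y = y) \land \varphi \land \neg \psi$, mirroring the role of CE1 and CC2 in Lemma~\ref{lem:original}. CH1(b) then follows by taking the auxiliary $(S = s, S = s')$ to be the portions of $Y, Y'$ lying outside $X$, and applying EX1'(b) on each side. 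CH1(c) is immediate from CE3', and CH1(d) follows from CE4' by the reverse contradiction argument. For CH2, any shrinkage of $\langle X = x, X = x' \rangle$ still satisfying CH1 would produce strictly smaller partial-explanation witnesses, contradicting EX2' for the chosen side. CH3 then follows from EX3' via CE1', again using the incompatibility premise to strengthen $\varphi$ to $\varphi \land \neg \psi$.

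The main obstacle I anticipate is the bookkeeping required to keep the auxiliary $S$-witnesses in CH1(b) aligned with the extensions $Y \supseteq X$ in the modular definition across both the $\varphi$ and $\psi$ sides (the latter under $W \leftarrow w$), because CH1(b) existentially quantifies $(S = s, S = s')$ \emph{inside} the universal over $u \in K$, whereas a single $Y$ must serve all $u$ simultaneously in EX1'. The two-level optimisation created by CH1(d)'s maximality of $\langle X = x, X = x' \rangle$ alongside CH2's minimality, with the quantifier over $(S, s)$ sitting between them, has no direct counterpart in EX1'--EX3'; reconciling these will likely require a careful argument (analogous to the one in Lemma~\ref{lem:original} but one level deeper) that a $u$-dependent witness $S_u$ can be absorbed into the partial-explanation quantifier in CE1'--CE2' without disturbing the disjointness constraint $X \cap S = \emptyset$ or the contrastive difference condition CH1(c).
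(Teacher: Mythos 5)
Your proposal follows essentially the same route as the paper's proof: a two-direction, clause-by-clause matching (CH1(a)$\leftrightarrow$EX1'(a), CH1(b)$\leftrightarrow$EX1'(b) with $Y = y$ taken as $(X = x) \land (S = s)$, CH1(c)$\leftrightarrow$CE3', CH1(d)$\leftrightarrow$CE4', CH2$\leftrightarrow$EX2', CH3$\leftrightarrow$EX3'), invoking the incompatibility premise at exactly the same places to move between $\varphi$ and $\varphi \land \neg\psi$. The quantifier-alignment obstacle you flag at the end (the $u$-dependent witness $\langle S = s, S = s'\rangle$ versus a single $Y = y$ serving all of $K$) is a real subtlety, but the paper's own proof passes over it in the same way you propose to, so your attempt is faithful to the published argument.
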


\begin{proof}
	($\rightarrow$) If $\langle X = x, X = x' \rangle$ satisfies CH1--CH3 for $\langle \varphi, \psi \rangle$ in $M$ relative to $K$ then $\langle X = x, X = x' \rangle$ satisfies CE1'--CE4' {under} EX1'--EX3' in $M$ relative to $K$:
	\begin{description}
		\item[CE1'] This is satisfied if there is some conjunction $Y = y$ that satisfies EX1'--EX3' for $\varphi$ in $M$ relative to $K$ such that $(X = x) \subseteq (Y = y)$:
		\begin{description}
			\item[EX1'(a)] This is satisfied by CH1(a) via CC1 combined with the premise of this lemma.
			If CH1(a) is satisfied then there must be some conjunct $X_{i} = x_{i}$ that is part of an actual cause of $\varphi$ in $(M, u)$ for each $u \in K$ such that $(M, u) \models (X = x) \land \varphi \land \neg \psi$.
			Moreover, if $(M, u) \models (X = x) \land \varphi \land \neg \psi$ and $(M, u) \models \neg \varphi \lor \neg \psi$ it cannot be that $(M, u) \models (X = x) \land \varphi \land \psi$, so the previous sentences holds for each $u \in K$ such that $(M, u) \models (X = x) \land \varphi$.
			\item[EX1'(b)] This is satisfied by the first clause of CH1(b).
			Simply take $Y = y$ as $(X = x) \land (S = s)$ from CH1(b).
			\item[EX2'] As in the proof for CE1' via EX3 in Lemma~\ref{lem:original}.
			\item[EX3'] This is satisfied directly by CH3.
		\end{description}
		\item[CE2'] The proof is the same as for CE1' except that $W \leftarrow w$ is the intervention mentioned by CH1(a) via CC3 and by CH1(b).
		\item[CE3'] This is satisfied directly by CH1(c).
		\item[CE4'] As in the proof for Lemma~\ref{lem:original}.
	\end{description}

	($\leftarrow$) If $\langle X = x, X = x' \rangle$ satisfies CE1'--CE4' {under} EX1'--EX3' for $\langle \varphi, \psi \rangle$ in $M$ relative to $K$ then $\langle X = x, X = x' \rangle$ satisfies CH1--CH3 in $M$ relative to $K$:
	\begin{description}
		\item[CH1]
		\begin{description}
			\item[(a)] This is satisfied if there is some pair of conjunctions $\langle Y = y, Y = y' \rangle$ that satisfies CC1--CC5 for $\langle \varphi, \psi \rangle$ in $(M, u)$ for each $u \in K$ such that $(M, u) \models (X = x) \land \varphi \land \neg \psi$ where $(X = x) \cap (Y = y) \ne \emptyset$ and $(X = x') \cap (Y = y') \ne \emptyset$:
			\begin{description}
				\item[CC1] This is satisfied directly by CE1' via EX1'(a).
				\item[CC2] This is satisfied by CE1' via EX1'(a) and the premise of this lemma.
				If $(M, u) \models (X = x) \land \varphi$ and $(M, u) \models \neg \varphi \lor \neg \psi$ then it must be that $(M, u) \models \neg \psi$ since $((X = x) \land \varphi) \land (\neg \varphi \lor \neg \psi) \vdash \neg \psi$.
				\item[CC3] This is satisfied directly by CE2' via EX1'(a).
				\item[CC4] This is satisfied directly by CE3'.
				\item[CC5] This is irrelevant since we are only interested in a single variable.
			\end{description}
			\item[(b)] This is satisfied by CE1' and CE2' via EX1'(b).
			For the first clause, simply take $(X = x) \land (S = s)$ from CH1(b) as an explanation containing the partial explanation referenced by CE1' via EX1(b).
			For the second clause, simply take $(X = x') \land (S = s')$ from CH1(b) as an explanation containing the partial explanation referenced by CE2' via EX1(b).
			\item[(d)] As in the proof for CE2 via CC5 in Lemma~\ref{lem:original}.
		\end{description}
		\item[CH2] As in the proof for CE3 in Lemma~\ref{lem:original}.
		\item[CH3] This is satisfied by CE1' via EX3' combined with the premise of this lemma.
		If $(M, u) \models (X = x) \land \varphi$ and $(M, u) \models \neg \varphi \lor \neg \psi$ then it must be that $(M, u) \models (X = x) \land \varphi \land \neg \psi$ since $((X = x) \land \varphi) \land (\neg \varphi \lor \neg \psi) \vdash (X = x) \land \varphi \land \neg \psi$. \qed
	\end{description}
\end{proof}

% \clearpage
\begin{lemma}\label{lem:modified:non-trivial}
	CH1--CH4 is equivalent to CE1'--CE4' {under} EX1'--EX4' if $(M, u) \models \neg \varphi \lor \neg \psi$ for each $u \in K$.
\end{lemma}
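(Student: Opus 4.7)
My plan is to extend Lemma~\ref{lem:modified:basic}, which already handles the equivalence of CH1--CH3 with CE1'--CE4' under EX1'--EX3'. Only the non-triviality clauses CH4 versus EX4' remain, so the structure of the argument is: invoke Lemma~\ref{lem:modified:basic} for the bulk of each direction, then add a targeted step that moves the non-triviality witness between $X = x$ and the containing explanation $Y = y$ supplied by the modular side.

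For the forward direction, Lemma~\ref{lem:modified:basic} already produces witness explanations $Y = y$ (for CE1') and $Y = y'$ (for CE2' inside $M_{W \leftarrow w}$) which contain $X = x$ and $X = x'$ respectively and satisfy EX1'--EX3'. To upgrade each to EX4', I would use CH4(a) for the fact side and CH4(b) for the foil side: since $(X = x) \subseteq (Y = y)$, the implication $\neg (X = x) \vdash \neg (Y = y)$ means the context witnessing CH4(a) also witnesses EX4' for $Y = y$, and symmetrically under the intervention supplied by CH4(b).

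For the backward direction, Lemma~\ref{lem:modified:basic} again gives CH1--CH3 for free, so only CH4 is new. The challenge is that EX4' applied to the witness $Y = y$ supplies a $u \in K$ with $\neg (Y = y) \land \varphi$, while CH4(a) demands $\neg (X = x) \land \varphi$, and the subset relation only entails the converse. I would close this gap by combining the CE4' maximality clause with the EX2' minimality of $Y = y$: if no $u \in K$ satisfied $\neg (X = x) \land \varphi$, then any conjunct in $(Y = y) \setminus (X = x)$ would either be forced by $K$ given $X = x$ (contradicting the EX2' minimality of $Y = y$) or would let us strictly enlarge $\langle X = x, X = x' \rangle$ into a pair still satisfying CE1'--CE3' (contradicting the CE4' maximality). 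A symmetric argument, carried out inside $M_{W \leftarrow w}$ using the CE2' witness, yields CH4(b).

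The main obstacle is this backward descent from $\neg (Y = y)$ to $\neg (X = x)$; the analogous step in Lemma~\ref{lem:original} only mentions ``satisfied directly by CE1' via EX4'', and my proof would make that implicit step explicit via the two contradiction branches above. Everything else is routine bookkeeping on top of Lemma~\ref{lem:modified:basic}.
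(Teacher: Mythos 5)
Your overall decomposition is exactly the paper's: delegate EX1'--EX3' versus CH1--CH3 to Lemma~\ref{lem:modified:basic} and treat only the non-triviality clauses separately. The forward direction also matches the paper: CH4(a) together with $(X = x) \subseteq (Y = y)$ gives $\neg (X = x) \vdash \neg (Y = y)$, hence EX4' for the witness explanation, and symmetrically under the CH4(b) intervention for the foil side.

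Where you depart is the backward derivation of CH4 from EX4'. The paper records this step as satisfied ``directly'' by CE1'/CE2' via EX4', whereas you rightly observe that the entailment runs the wrong way: EX4' yields some $u_{0} \in K$ with $(M, u_{0}) \models \neg (Y = y) \land \varphi$, and since $X \subseteq Y$ the failing conjunct may lie in $(Y = y) \setminus (X = x)$, so $\neg (X = x)$ does not follow. Your proposed repair, however, does not close the gap. The dichotomy you pose is not exhaustive, and neither branch delivers its advertised contradiction. Under the hypothesis that no $u \in K$ satisfies $\neg (X = x) \land \varphi$, the context $u_{0}$ already satisfies $X = x$ while violating some conjunct $Y_{j} = y_{j}$ of $(Y = y) \setminus (X = x)$, so that conjunct is demonstrably \emph{not} forced by $K$ given $X = x$; and even for a conjunct that were so forced, EX2' minimality concerns whether a proper subconjunction still satisfies EX1'(a)--(b) (intersection with an actual cause and sufficiency under intervention), which deducibility from $K$ does not contradict. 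The enlargement branch fails because adding $Y_{j} = y_{j}$ to the fact side obliges you, via CE2' and CE3', to supply some $y_{j}' \ne y_{j}$ making $(X = x') \land (Y_{j} = y_{j}')$ a partial explanation of $\psi$ in some $M_{W \leftarrow w}$; nothing guarantees $Y_{j}$ occurs in any explanation of $\psi$, and indeed CE4' maximality of the given pair tells you precisely that no such $y_{j}'$ exists, so no contradiction arises. The step you yourself flag as the main obstacle therefore remains open in your write-up; it cannot be discharged by the minimality/maximality bookkeeping you describe.
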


\begin{proof}
	($\rightarrow$) If $\langle X = x, X = x' \rangle$ satisfies CH1--CH4 for $\langle \varphi, \psi \rangle$ in $M$ relative to $K$ then $\langle X = x, X = x' \rangle$ satisfies CE1'--CE4' {under} EX1'--EX4' in $M$ relative to $K$:
	\begin{description}
		\item[CE1'] This is satisfied if there is some conjunction $Y = y$ that satisfies EX1'--EX4' for $\varphi$ in $M$ relative to $K$ such that $(X = x) \subseteq (Y = y)$:
		\begin{description}
			\item[EX1'--EX3'] As in the proof for Lemma~\ref{lem:modified:basic}.
			\item[EX4'] This is satisfied directly by CE1' via EX4.
		\end{description}
		\item[CE2'--CE4'] As in the proof for Lemma~\ref{lem:modified:basic}.
	\end{description}

	($\leftarrow$) If $\langle X = x, X = x' \rangle$ satisfies CE1'--CE4' {under} EX1'--EX4' for $\langle \varphi, \psi \rangle$ in $M$ relative to $K$ then $\langle X = x, X = x' \rangle$ satisfies CH1--CH4 in $M$ relative to $K$:
	\begin{description}
		\item[CH1--CH3] As in the proof for Lemma~\ref{lem:modified:basic}.
		\item[CH4]
		\begin{description}
			\item[(a)] This is satisfied directly by CE1' via EX4'.
			\item[(b)] This is satisfied directly by CE2' via EX4'. \qed
		\end{description}
	\end{description}
\end{proof}

\begin{theoremrecall}{\ref{thm:modified}}
	Definition~\ref{def:modified} is equivalent to Definition~\ref{def:modular} {under} modified HP explanations if $(M, u) \models \neg \varphi \lor \neg \psi$ for each $u \in K$.
\end{theoremrecall}

\begin{proof}
	This follows directly from Lemma~\ref{lem:modified:basic} (modified HP contrastive explanations) and Lemma~\ref{lem:modified:non-trivial} (non-trivial modified HP contrastive explanations). \qed
\end{proof}

% \clearpage
\subsection*{Borner explanations}

\begin{lemma}\label{lem:borner:potential}
	CB1--CB4 is equivalent to CE1'--CE4' {under} E1--E4 if $(M, u) \models \neg \varphi \lor \neg \psi$ for each $u \in K$.
\end{lemma}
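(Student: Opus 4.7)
The plan is to follow the bidirectional template established in Lemma~\ref{lem:original} and Lemma~\ref{lem:modified:basic}, treating this as a natural extension where Borner's sufficient-cause-based conditions E1--E2 replace the modified HP conditions EX1'. The overall skeleton will be two implications: show that CB1--CB4 implies CE1'--CE4' under E1--E4 and vice versa, with the compatibility premise $(M, u) \models \neg\varphi \lor \neg\psi$ invoked exactly where the mapping between the contrastive $\langle\varphi, \psi\rangle$ formulation and two separate non-contrastive explanations of $\varphi$ and $\psi$ requires $\neg\psi$ or $\varphi \land \neg\psi$ to be extracted.

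For the forward direction ($\rightarrow$), I would construct the witness conjunction $Y = y = (X = x) \land (S = s)$ (with $\langle S = s, S = s' \rangle$ provided by CB1--CB2) and verify E1--E4 clause by clause. The contrastive sufficient cause in CB1--CB2(a) directly encodes the two non-contrastive sufficient causes needed by E1--E2(a) for $\varphi$ and for $\psi$ (via the intervention $W \leftarrow w$); CB1--CB2(b) and CB1--CB2(c) provide the epistemic conditions E1--E2(b) for fact and foil, respectively. E3 follows from CB3 (again using the lemma's premise to convert $(X = x) \land \varphi \land \neg\psi$ into the $(X = x) \land \varphi$ form), and E4 is supplied by CB4(a) for the fact side and CB4(b) for the foil side. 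CE3' is immediate from the difference condition embedded in the contrastive sufficient cause, and CE4' (maximality) will reduce to a clean contradiction argument identical to the one at the end of the ($\rightarrow$) direction in Lemma~\ref{lem:original}.

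For the backward direction ($\leftarrow$), starting from CE1'--CE4' under E1--E4, I would extract an explanation $Y = y$ witnessing CE1' and an explanation $Y = y'$ under intervention $W \leftarrow w$ witnessing CE2', then define $S = s = (Y = y) \setminus (X = x)$ and $S = s' = (Y = y') \setminus (X = x')$. The pair $\langle X = x \land S = s, X = x' \land S = s' \rangle$ inherits the two sufficient-cause properties required for CB1--CB2(a) directly from E1--E2(a) on each side, and CB1--CB2(b)--(c) inherit from E1--E2(b). CB3 falls out of CE1' via E3 together with the incompatibility premise (as in the CH3 step of Lemma~\ref{lem:modified:basic}), and CB4(a)--(b) come from CE1' via E4 and CE2' via E4, respectively. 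The CC3 difference condition flows from CE3'.

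The main obstacle I anticipate is the bookkeeping around the auxiliary pair $\langle S = s, S = s' \rangle$: Borner's non-contrastive definition permits an arbitrary (possibly different) $S = s$ in each direction, whereas the contrastive CB1--CB2 bundles the two sides under a single existential. In the forward direction this is free (the contrastive definition already supplies both $s$ and $s'$), but in the backward direction the $S$-variables extracted independently from CE1' and CE2' need not, a priori, be drawn from the same index set; I would resolve this by taking their union and padding each side with vacuous conjuncts consistent in the relevant contexts, using E1--E2(b) in the padded direction to preserve the sufficient-cause property under intervention. Beyond this, the rest of the argument is essentially a template invocation and should not require new ideas.
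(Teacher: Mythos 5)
Your proposal is correct and follows essentially the same route as the paper's proof: a bidirectional, clause-by-clause mapping in which CB1--CB2(a) corresponds to E1--E2(a) for fact and foil via the contrastive sufficient cause (CC1/CC3), CB1--CB2(b)--(c) to E1--E2(b), CB3/CB4 to E3/E4, and the incompatibility premise is invoked exactly where $\neg\psi$ must be extracted (CC2, CB3), with CE4'/CC5 handled by the same contradiction argument as in Lemma~\ref{lem:original}. Your closing remark about reconciling the two independently extracted auxiliary conjunctions in the backward direction is a subtlety the paper's proof passes over silently rather than a divergence in approach.
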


\begin{proof}
	($\rightarrow$) If $\langle X = x, X = x' \rangle$ satisfies CB1--CB4 for $\langle \varphi, \psi \rangle$ in $M$ relative to $K$ then $\langle X = x, X = x' \rangle$ satisfies CE1'--CE4' {under} E1--E4 in $M$ relative to $K$:
	\begin{description}
		\item[CE1'] This is satisfied if there is some conjunction $Y = y$ that satisfies E1--E4 for $\varphi$ in $M$ relative to $K$ such that $(X = x) \subseteq (Y = y)$:
		\begin{description}
			\item[E1--E2]
			\begin{description}
				\item[(a)] This is satisfied by CB1--CB2(a) via CC1.
				If CB1--CB2(a) is satisfied then $(X = x) \land (S = s)$ must be a partial sufficient cause of $\varphi$ in $(M, u)$ for each $u \in K$ such that $(M, u) \models (X = x)$.
				Since a partial cause is defined as a subset of a cause, there must be some $Y = y$ that is a sufficient cause of $\varphi$ in $(M, u)$ for each $u \in K$ such that $(M, u) \models (X = x)$ where $(X = x) \land (S = s) \subseteq (Y = y)$.
				\item[(b)] This is satisfied directly by CB1--CB2(b).
			\end{description}
			\item[E3] This is satisfied directly by CB3.
			\item[E4] This is satisfied directly by CB4(a). 
		\end{description}
		\item[CE2'] The proof is the same as for CE1' except that $W \leftarrow w$ is the intervention mentioned by CB1--CB2(a) via CC3, by CB1--CB2(c), and by CB4(b).
		\item[CE3'] This is satisfied directly by CB1--CB2(a) via CC4.
		\item[CE4'] As in the proof for Lemma~\ref{lem:original}.
	\end{description}

	($\leftarrow$) If $\langle X = x, X = x' \rangle$ satisfies CE1'--CE4' {under} E1--E4 for $\langle \varphi, \psi \rangle$ in $M$ relative to $K$ then $\langle X = x, X = x' \rangle$ satisfies CB1--CB4 in $M$ relative to $K$:
	\begin{description}
		\item[CB1--CB2]
		\begin{description}
			\item[(a)] This is satisfied if there is some pair of possibly empty conjunctions $\langle S = s, S = s' \rangle$ where $\langle X = x \land S = s, X = x' \land S = s' \rangle$ satisfies CC1--CC5 for $\langle \varphi, \psi \rangle$ in $(M, u)$ for each $u \in K$ such that $(M, u) \models (X = x)$, except that references to actual causes in CC1--CC5 are replaced by sufficient causes:
			\begin{description}
				\item[CC1] This is satisfied directly by CE1' via E1--E2(a).
				\item[CC2] This is satisfied by CE1' via E1--E2(a) and SC1 combined with the premise of this lemma.
				SC1 requires $(M, u) \models (X = x) \land \varphi$ and the premise requires $(M, u) \models \neg \varphi \lor \neg \psi$ so it must be that $(M, u) \models \neg \psi$ since $((X = x) \land \varphi) \land (\neg \varphi \lor \neg \psi) \vdash \neg \psi$.
				\item[CC3] This is satisfied directly by CE2' via E1--E2(a).
				\item[CC4] This is satisfied directly by CE3'.
				\item[CC5] As in the proof for CE2 via CC5 in Lemma~\ref{lem:original}.
			\end{description}
			\item[(b)] This is satisfied directly by CE1' via E1--E2(b).
			\item[(c)] This is satisfied directly by CE2' via E1--E2(b).
		\end{description}
		\item[CB3] This is satisfied by CE1' via E3 combined with the premise of this lemma.
		E3 requires $(M, u) \models (X = x) \land \varphi$ for some $u \in K$ and the premise requires $(M, u) \models \neg \varphi \lor \neg \psi$ so it must be that $(M, u) \models (X = x) \land \varphi \land \neg \psi$ since $((X = x) \land \varphi) \land (\neg \varphi \lor \neg \psi) \vdash (X = x) \land \varphi \land \neg \psi$.
		\item[CB4]
		\begin{description}
			\item[(a)] This is satisfied directly by CE1' via E4.
			\item[(b)] This is satisfied directly by CE2' via E4. \qed
		\end{description}
	\end{description}
\end{proof}

% \clearpage
\begin{lemma}\label{lem:borner:actual}
	CB1--CB3, CB5 is equivalent to CE1'--CE4' {under} E1--E3, E5 if $(M, u) \models \neg \varphi \lor \neg \psi$ for each $u \in K$.
\end{lemma}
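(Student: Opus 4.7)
The plan is to follow exactly the structure used for Lemma~\ref{lem:borner:potential}, adapting only the clauses that change when CB4/E4 are replaced by CB5/E5. Since CB1--CB3 and CE1'--CE3' are unchanged between the two lemmas, almost the entire proof can be inherited by reference; the only new work is to handle the stronger ``actual'' clauses CB5 and E5, and to confirm that CE4' is still obtained without invoking E4.

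For the forward direction ($\rightarrow$), I would assume $\langle X = x, X = x' \rangle$ satisfies CB1--CB3 and CB5, and exhibit a $Y = y \supseteq (X = x)$ satisfying E1--E3, E5 (to witness CE1'), together with the symmetric witness for CE2' under the intervention $W \leftarrow w$ from CB1--CB2(c)/CB5. The arguments for E1--E2(a), E1--E2(b), and E3 are identical to those in Lemma~\ref{lem:borner:potential}. For E5 one simply observes that CB5 already states $(M, u) \models (X = x) \land \varphi \land \neg \psi$ for every $u \in K$, which entails $(M, u) \models (X = x) \land \varphi$; hence E5 holds for $Y = y$ once one verifies that the additional conjuncts of $Y = y$ inherited from $S = s$ also hold in every $u \in K$, which is exactly CB1--CB2(b). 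For CE3' and CE4' the arguments from Lemmas~\ref{lem:original}--\ref{lem:borner:potential} carry over verbatim, since they depend only on CB1--CB2(a) via CC4 and on the maximality reasoning, neither of which involves CB4 or CB5.

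For the backward direction ($\leftarrow$), I would derive CB1--CB2 and CB3 exactly as in Lemma~\ref{lem:borner:potential}, observing that those derivations only used E1--E3 and not E4. For CB5, I would combine E5 with the lemma's incompatibility premise: E5 yields $(M, u) \models (X = x) \land \varphi$ for every $u \in K$, and the premise $(M, u) \models \neg \varphi \lor \neg \psi$ then forces $(M, u) \models \neg \psi$ via $((X = x) \land \varphi) \land (\neg \varphi \lor \neg \psi) \vdash (X = x) \land \varphi \land \neg \psi$, which is precisely CB5. Finally I would conclude the lemma by stating that this follows directly from Lemma~\ref{lem:borner:potential} (for CB1--CB3, CE1'--CE4') together with the short CB5/E5 equivalence just sketched.

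I do not expect any genuine obstacle here: the only potential subtlety is verifying that the witness $Y = y$ used for CE1' via E5 in the forward direction genuinely satisfies $(M, u) \models (Y = y) \land \varphi$ for \emph{every} $u \in K$ (not just some), but this is immediate from CB5 applied to the $(X = x)$ part together with CB1--CB2(b) applied to the $(S = s)$ part. The corresponding check under the intervention $W \leftarrow w$ for CE2' is analogous, using CB1--CB2(c). Consequently the proof is essentially a one-line reduction to Lemma~\ref{lem:borner:potential} plus the CB5$\leftrightarrow$E5 step, which mirrors the CB3$\leftrightarrow$E3 step already handled there.
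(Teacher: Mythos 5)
Your proposal is correct and follows essentially the same route as the paper: inherit the CB1--CB3 / CE1'--CE4' arguments from Lemma~\ref{lem:borner:potential} and handle CB5$\leftrightarrow$E5 separately, using the incompatibility premise in the backward direction. Your extra check that the witness $Y = y$ satisfies E5 in full (via CB1--CB2(b) for the $S = s$ conjuncts) is a point the paper glosses over with ``satisfied directly by CB5,'' so if anything your version is slightly more careful.
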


\begin{proof}
	($\rightarrow$) If $\langle X = x, X = x' \rangle$ satisfies CB1--CB3, CB5 for $\langle \varphi, \psi \rangle$ in $M$ relative to $K$ then $\langle X = x, X = x' \rangle$ satisfies CE1'--CE4' {under} E1--E3, E5 in $M$ relative to $K$:
	\begin{description}
		\item[CE1'] This is satisfied if there is some conjunction $Y = y$ that satisfies E1--E3, E5 for $\varphi$ in $M$ relative to $K$ such that $(X = x) \subseteq (Y = y)$:
		\begin{description}
			\item[E1--E3] As in the proof for Lemma~\ref{lem:borner:potential}.
			\item[E5] This is satisfied directly by CB5.
		\end{description}
		\item[CE2'--CE4'] As in the proof for Lemma~\ref{lem:borner:potential}.
	\end{description}

	($\leftarrow$) If $\langle X = x, X = x' \rangle$ satisfies CE1'--CE4' {under} E1--E3, E5 for $\langle \varphi, \psi \rangle$ in $M$ relative to $K$ then $\langle X = x, X = x' \rangle$ satisfies CB1--CB3, CB5 in $M$ relative to $K$:
	\begin{description}
		\item[CB1--CB3] As in the proof for Lemma~\ref{lem:borner:potential}.
		\item[CB5] This is satisfied by CE1' via E5 and the premise of this lemma.
		E5 requires $(M, u) \models (X = x) \land \varphi$ for each $u \in K$ and the premise requires $(M, u) \models \neg \varphi \lor \neg \psi$ so it must be that $(M, u) \models (X = x) \land \varphi \land \neg \psi$ since $((X = x) \land \varphi) \land (\neg \varphi \lor \neg \psi) \vdash (X = x) \land \varphi \land \neg \psi$. \qed
	\end{description}
\end{proof}

% \clearpage
\begin{lemma}\label{lem:borner:parsimonious}
	CB1--CB4, CB6 is equivalent to CE1'--CE4' {under} E1--E4, E6 if $(M, u) \models \neg \varphi \lor \neg \psi$ for each $u \in K$.
\end{lemma}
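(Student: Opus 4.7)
The plan is to prove Lemma~\ref{lem:borner:parsimonious} by adapting the structure already established in Lemmas~\ref{lem:borner:potential} and~\ref{lem:borner:actual}, treating the new minimality conditions CB6 and E6 as the only genuinely new obligations. Both directions of the equivalence for CB1--CB4 versus CE1'--CE4' under E1--E4 have already been handled in Lemma~\ref{lem:borner:potential}, so I would invoke that lemma verbatim and only discharge the extra clauses that CB6 and E6 contribute.

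For the forward direction ($\rightarrow$), I would assume $\langle X = x, X = x' \rangle$ satisfies CB1--CB4, CB6. By Lemma~\ref{lem:borner:potential}, CB1--CB4 already yield CE1'--CE4' under E1--E4, so it only remains to strengthen the conjunctions $Y = y$ (witnessing CE1') and $Y = y'$ (witnessing CE2') so that each also satisfies E6. Suppose, for contradiction, that some such $Y = y$ is not minimal relative to E1--E2; then there exists $Z = z \subsetneq Y = y$ still satisfying E1--E2 for $\varphi$. By the symmetric argument on the foil side, we can extract a corresponding $Z = z'$ from CE2' via E1--E2 (possibly for a different subset). Projecting these back onto the $X$-components, as in the CE3 argument in Lemma~\ref{lem:original}, gives a strict sub-pair of $\langle X = x, X = x' \rangle$ that still satisfies CB1--CB2, contradicting CB6.

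For the backward direction ($\leftarrow$), I would assume $\langle X = x, X = x' \rangle$ satisfies CE1'--CE4' under E1--E4, E6. Lemma~\ref{lem:borner:potential} immediately yields CB1--CB4, and the remaining task is CB6. Suppose $\langle X = x, X = x' \rangle$ is not minimal relative to CB1--CB2; then there is some strict sub-pair $\langle Y = y, Y = y' \rangle \subsetneq \langle X = x, X = x' \rangle$ satisfying CB1--CB2. Applying the forward direction of Lemma~\ref{lem:borner:potential} to this sub-pair produces strictly smaller witness conjunctions for CE1' and CE2', contradicting the E6-minimality of the original witnesses for $X = x$ and $X = x'$ (at least one coordinate must have shrunk, which is what makes the sub-pair strict).

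The main obstacle I anticipate is keeping the bookkeeping consistent between pair-minimality (CB6, which ranges over pairs $\langle X, X' \rangle$) and component-wise minimality (E6, which ranges over single conjunctions $Y$): shrinking the pair need not correspond to shrinking each witness symmetrically, so care must be taken to argue that a strict reduction on at least one side of the pair suffices to contradict E6 on the corresponding witness. Once that correspondence is spelled out, the argument reduces to the same maximality/minimality shuffling used in the proof of Lemma~\ref{lem:original}, and no new structural ideas are required.
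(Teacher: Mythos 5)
Your proposal matches the paper's proof in both structure and substance: the paper likewise discharges everything except the minimality clauses by citing Lemma~\ref{lem:borner:potential}, and then handles E6 and CB6 with exactly the contradiction-style arguments you describe (reusing the CE1'-via-EX3 and CE3 arguments from Lemma~\ref{lem:original} to shuttle between pair-level and component-level minimality). The bookkeeping subtlety you flag is real but is glossed over at the same level of detail in the paper's own proof sketch, so no further comparison is needed.
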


\begin{proof}
	($\rightarrow$) If $\langle X = x, X = x' \rangle$ satisfies CB1--CB4, CB6 for $\langle \varphi, \psi \rangle$ in $M$ relative to $K$ then $\langle X = x, X = x' \rangle$ satisfies CE1'--CE4' {under} E1--E4, E6 in $M$ relative to $K$:
	\begin{description}
		\item[CE1'] This is satisfied if there is some conjunction $Y = y$ that satisfies E1--E4, E6 for $\varphi$ in $M$ relative to $K$ such that $(X = x) \subseteq (Y = y)$:
		\begin{description}
			\item[E1--E4] As in the proof for Lemma~\ref{lem:borner:potential}.
			\item[E6] As in the proof for CE1' via EX3 in Lemma~\ref{lem:original}.
		\end{description}
		\item[CE2'--CE4'] As in the proof for Lemma~\ref{lem:borner:potential}.
	\end{description}

	($\leftarrow$) If $\langle X = x, X = x' \rangle$ satisfies CE1'--CE4' {under} E1--E4, E6 for $\langle \varphi, \psi \rangle$ in $M$ relative to $K$ then $\langle X = x, X = x' \rangle$ satisfies CB1--CB4, CB6 in $M$ relative to $K$:
	\begin{description}
		\item[CB1--CB4] As in the proof for Lemma~\ref{lem:borner:potential}.
		\item[CB6] As in the proof for CE3 in Lemma~\ref{lem:original}. \qed
	\end{description}
\end{proof}

\begin{theoremrecall}{\ref{thm:borner}}
	Definition~\ref{def:borner} is equivalent to Definition~\ref{def:modular} {under} Borner explanations if $(M, u) \models \neg \varphi \lor \neg \psi$ for each $u \in K$.
\end{theoremrecall}

\begin{proof}
	This follows directly from Lemma~\ref{lem:borner:potential} (potential Borner contrastive explanations), Lemma~\ref{lem:borner:actual} (actual Borner contrastive explanations), and Lemma~\ref{lem:borner:parsimonious} (parsimonious potential Borner contrastive explanations). \qed
\end{proof}

\end{document}